\documentclass[11pt,a4paper]{article}
\usepackage[top=1in,bottom=1in,left=0.9in,right=0.9in]{geometry}
\usepackage{amsmath,amssymb,amsfonts,amsthm,mathtools}
\usepackage[utf8]{inputenc}
\DeclareUnicodeCharacter{0166}{\text{\TH}}
\usepackage[T1]{fontenc}
\usepackage{soul}
\usepackage{enumitem}
\usepackage{fix-cm}
\usepackage{booktabs}
\usepackage{nicefrac}
\usepackage[expansion=false]{microtype}
\usepackage{xcolor}
\usepackage{bbm}
\usepackage{placeins}
\usepackage{graphicx}
\usepackage{subcaption}
\usepackage{multirow}
\usepackage{algorithm}
\usepackage{algorithmic}
\usepackage{tikz}
\usetikzlibrary{arrows.meta,positioning,fit,calc}
\usepackage[round,authoryear]{natbib}
\usepackage{hyperref}
\usepackage{url}
\usepackage[capitalize,noabbrev]{cleveref}
\crefname{equation}{Equation}{Equations}
\Crefname{equation}{Equation}{Equations}
\crefformat{equation}{Equation~#2(#1)#3}
\Crefformat{equation}{Equation~#2(#1)#3}
\hypersetup{colorlinks=true,citecolor=blue,linkcolor=blue,urlcolor=blue}

\usepackage{amsmath,amsfonts,bm}

\def\eqref#1{equation~\ref{#1}}

\def\1{\bm{1}}

\def\va{{\bm{a}}}

\def\ve{{\bm{e}}}

\def\vh{{\bm{h}}}

\def\vs{{\bm{s}}}

\def\vu{{\bm{u}}}

\def\vz{{\bm{z}}}

\DeclareMathAlphabet{\mathsfit}{\encodingdefault}{\sfdefault}{m}{sl}
\SetMathAlphabet{\mathsfit}{bold}{\encodingdefault}{\sfdefault}{bx}{n}

\newcommand{\softmax}{\mathrm{softmax}}

\renewcommand{\eqref}[1]{\cref{#1}}
\newcommand{\TabCon}{\textsc{TabCon}}
\newcommand{\TabPFN}{\textsc{TabPFN}}
\newcommand{\TabICL}{\textsc{TabICL}}
\newcommand{\SOMTab}{\textsc{SOMTab}}
\newcommand{\LimiX}{\textsc{LimiX}}
\newtheorem{prop}{Proposition}

\title{Can Tabular Foundation Models Amortize Statistical Inference?}

\author{%
\parbox{0.95\textwidth}{\centering
Kai Ye$^{1}$, Shijin Gong$^{2}$, Hongyi Zhou$^{3}$,\\
Valentina Zangirolami$^{4,*}$, Chengchun Shi$^{1,*}$\\[0.6em]
{\small
$^{1}$Department of Statistics, London School of Economics and Political Science, London, UK\\
$^{2}$School of Management, University of Science and Technology of China, Hefei, China\\
$^{3}$School of Statistics and Data Science, Shanghai University of Finance and Economics, Shanghai, China\\
$^{4}$Department of Economics, Management and Statistics, University of Milano-Bicocca, Milan, Italy\\[0.3em]
$^{*}$Corresponding authors. Emails:
\texttt{valentina.zangirolami@unimib.it}, \texttt{c.shi7@lse.ac.uk}
}
}}
\date{}

\begin{document}
\maketitle

\begin{abstract}
\noindent
For decades, statistical inference has largely been developed one problem at a time. Given a scientific target, such as a treatment effect or a regression function, statisticians design a problem-specific estimator together with a procedure for quantifying its uncertainty. This paper proposes a different paradigm. We focus on a classical problem in statistical inference, confidence interval  construction, and develop \TabCon, an amortized inference system built on a tabular foundation model that produces confidence intervals for new datasets through a simple forward pass. The key methodological ingredients of TabCon are a sparse mixture-of-experts architecture and reinforcement-learning-based post-training that calibrate the resulting confidence intervals to a desired coverage level. Across a wide range of benchmark datasets, TabCon attains near-nominal coverage while producing short confidence intervals. At inference time, it also offers considerably greater computational efficiency, running {\bf{50}} times faster than the classical bootstrap procedure, even when the latter uses only 50 bootstrap samples.
\end{abstract}

\section{Introduction}\label{sec:intro}
Modern machine learning has largely focused on achieving accurate prediction, yet this alone is often insufficient for scientific analysis. To draw scientific conclusions, researchers must also quantify how precisely the data identify a quantity of interest, whether an average treatment effect, a regression coefficient, or the value of a regression function. Quantifying such uncertainty is a long-standing and fundamental problem in statistics. Over the past century and beyond, statisticians have developed a range of inferential procedures for uncertainty quantification, together with rigorous theoretical guarantees that characterize when and why these procedures are valid. This line of work has given rise to a rich and mature literature in statistical inference\footnote{The term inference carries different meanings in statistics and machine learning. In statistics, it refers to uncertainty quantifying through procedures such as confidence interval construction and hypothesis testing. In machine learning, inference usually refers to applying a trained model to new inputs at deployment time.} (see Section \ref{sec:related} for a more detailed discussion).

These inferential procedures are often non-amortized: given a new dataset, one must run a dedicated procedure at inference time for uncertainty quantification. This paradigm, however, comes with several limitations. Some methods rely on restrictive parametric assumptions or large-sample approximations for their theoretical validity, while others incur considerable computational cost because they require repeated resampling and model refitting.

We propose a different paradigm: amortizing statistical inference through pre/post-training. As an initial step toward this goal, this paper studies a classical problem in statistical inference, constructing confidence intervals (CIs) for the value of an unknown regression function at a given covariate point. 

\textbf{Our main contribution} is to introduce \TabCon, a tabular foundation model that amortizes confidence interval construction across datasets, producing CIs for each new dataset through a single forward pass. This design substantially reduces computation at inference time, making \TabCon\ {\bf 50} times faster than the classical bootstrap procedure at inference time in our experiments (Table \ref{tab:runtime-grid-new-task}). Compared with existing tabular foundation models,  \TabCon\ contains three novelties:
\begin{enumerate}[leftmargin=*]
    \item \TabCon\ implements a novel methodology for CI construction by employing residual heads that directly model the distribution of the estimation error, rather than the predictive distribution of the outcome as in existing tabular foundation models. This difference enables \TabCon\ to construct CIs for the regression function, rather than prediction intervals for future outcomes. In our experiments, the latter are {\bf{13}}\%–{\bf{148}}\% wider than the CIs produced by \TabCon. 
    
    \item From an architectural perspective, \TabCon\ adopts a sparse mixture-of-experts (MoE) structure \citep{shazeer2017outrageously}, with different experts specializing in data generating processes with different levels of difficulty for CI construction (see \Cref{sec:setup}). Our ablation study demonstrates the crucial role of this architecture in calibrating the resulting CIs to a desired coverage level (Figure \ref{fig:component-ablation}).

    \item From a training perspective, \TabCon\ differs from existing tabular foundation models, which are predominantly trained with supervised prediction objectives \citep{hollmann2025tabpfn,qu2026tabiclv2}, and instead employs reinforcement learning (see \Cref{sec:method}). This allows the adaptive selection of a small subset of experts that are best suited to constructing CIs for the current dataset.
\end{enumerate}

\section{Related Works}\label{sec:related}
Our work is closely related to two lines of research: statistical methods for confidence interval construction, and tabular foundation models.


\textbf{Statistical methods for CI construction}. CI construction is a classical statistical inference problem. Conventional approaches often impose a simple parametric structure on the underlying relationship, such as a linear regression model, so that uncertainty in the estimated regression function can be characterized through uncertainty in the estimated model parameters. Under classical large-sample asymptotic theory, including $M$-estimation \citep{van1996weak,van2000asymptotic}, these parameters are asymptotically normal under mild regularity conditions with a closed-form asymptotic variance. Combining this normal approximation with a plug-in estimate of the corresponding variance leads to Wald-type CIs of the estimated parameter \citep{wald1943tests}, which can be translated into CIs for the regression function at a given covariate through the delta method \citep{casella2024statistical}. 

The second type of approaches is resampling-based. These approaches construct many pseudo datasets through resampling, repeatedly apply the estimation procedure to each one, and use the empirical distribution of the resulting estimates for CI construction. The main example is the bootstrap \citep{efron1979bootstrap}, with numerous variants \citep[e.g.,][]{wu1986jackknife,efron2000introduction}.


These two approaches represent the main paradigms for CI construction, yet each suffers from certain limitations. Conventional asymptotic methods rely on restrictive parametric assumptions. Resampling-based methods are generally more flexible, but they can fail for complex nonparametric estimators, such as regression trees \citep{Mouli2007Conf}. 


Other approaches provide uncertainty quantification for complex nonparametric models. Conformal prediction offers prediction intervals with finite-sample coverage \citep{bates2021distribution,gibbs2025conformal}, with an inferential target different from ours: it quantifies uncertainty in a future outcome rather than in an estimated regression function. The former combines the outcome variability with estimation uncertainty arising from finite data. Consequently, prediction intervals are generally substantially wider than CIs. Other methods have also been designed for CIs tailored to specific nonparametric estimators, including random forests \citep{JMLR:v17:14-168} and neural networks \citep{schupbach2020quantifying,fei2024u,meng2026inference}. While sharing our inferential target, they are estimator-specific and non-amortized, requiring a dedicated procedure for each new dataset. 

To the contrary, \TabCon\ produces CIs directly through a single forward pass of a tabular foundation model, a class of models that we review next. 

\textbf{Tabular foundation models}. \TabCon\ is motivated by the recent development of tabular foundation models, which amortize the prediction procedure across datasets. These models are pretrained on diverse datasets generated from a wide range of data generating processes and make predictions on a new dataset through in-context learning. \TabPFN\ established this approach for tabular classification \citep{hollmann2023tabpfn} and subsequently extended it to regression \citep{hollmann2025tabpfn}. Later developments, such as \TabICL, \LimiX\ and \SOMTab, introduce more efficient model architectures and training strategies to scale the model to larger datasets  \citep{feuer2024tunetables,qu2025tabicl,zhang2025limix,qu2026tabiclv2,wang2026somtab}. This paradigm has also been extended beyond standard supervised prediction to a broader range of tasks, including causal inference \citep{ma2026foundation,zhang2026}, clustering \citep{zhao2026tabclustpfn}, and contextual bandits \citep{tan2026pfn}.

Importantly, standard tabular foundation models such as \TabPFN\ and \TabICL\ already provide a form of uncertainty quantification by learning a predictive distribution for a future outcome, rather than producing only a point prediction. However, as discussed earlier, uncertainty about a future outcome is fundamentally different from our target, the uncertainty about the estimated regression function. Prediction intervals derived from these models can therefore be substantially wider than CIs for the regression function itself (see Section \ref{sec:experiment}).

More recently, a few papers have studied uncertainty in estimating unknown quantities. \citet{nagler2025martingale} combine \TabPFN\ predictions with martingale posterior sampling to quantify uncertainty in conditional means and quantiles. \citet{ng2025tabmgp} similarly uses \TabPFN's predictive resampling distribution to construct posteriors for scientific estimands. \citet{fortini2026uncertainty} infer the epistemic uncertainty of a tabular foundation model's predictions from its sequential predictive updates. These methods target uncertainty closely related to our inferential objective, but recover such uncertainty through additional inference-time sampling or sequential predictive computation. In contrast, \TabCon\ amortizes CI construction directly  through pretraining and reinforcement learning (RL)-based post-training, without inference-time sampling or sequential predictive computation.

\section{Problem Setup and TabCon Overview}\label{sec:setup}
In this section, we first formulate the confidence interval construction problem. We then discuss main idea behind \TabCon\ and introduce its architecture. 

\textbf{Problem setup}. Consider a dataset $\mathcal{D}=\{(X_i,Y_i)\}_{i=1}^n$ consisting of i.i.d. predictor-response pairs, where each $X_i$ is a vector-valued covariate and $Y_i$ is a scalar response. Let
$f(x)=\mathbb{E}(Y| X=x)$ 
denote the conditional mean function. Our goal is to infer the value of $f$ at a collection of future covariate values $\{X_{n+i}\}_{i=1}^m$. In many scientific applications, point estimation alone is insufficient. For each future covariate value $x$ and a prescribed significance level $\alpha\in(0,1)$, we seek a confidence interval $[L,U]$ satisfying
\begin{equation}\label{CI:coverage}
\mathbb{P}\bigl(L\le f(x)\le U\bigr)\ge 1-\alpha,
\end{equation}
where the probability is taken with respect to the randomness of the interval endpoints $L$ and $U$ constructed from the finite-sample dataset $\mathcal{D}$.

Our primary objective is to satisfy, or closely approximate, the coverage requirement in \eqref{CI:coverage}. Among CIs achieving the desired coverage level, we further seek intervals with small expected length $\mathbb{E}(U-L)$, in order to localize the unknown value $f(x)$ as precisely as possible.

\textbf{Amortized CI construction}. We next describe how \TabCon\ achieves amortized CI construction. As discussed in \Cref{sec:related}, several recent methods use tabular foundation models for CI construction. However, they use the model primarily as a predictive engine and recover CIs through additional inference-time sampling or computation. As a result, their CI construction procedure is not fully amortized through training.

\TabCon's main idea is simple. We first use an existing tabular foundation model to obtain a point estimate $\widehat{f}(x)$ of the regression function. During training, the data generating process (DGP) is known, and hence the true value $f(x)$ is available. This allows us to compute the estimation error $\widehat{f}(x)-f(x)$. We then adopt the in-context learning paradigm of existing tabular foundation models, but change the learning target from the future response to this estimation error. In this way, \TabCon\ learns the distribution of the estimation error across pretraining tasks.

Let $Q_{\alpha}(x)$ denote the $\alpha$th quantile of the distribution of $\widehat{f}(x)-f(x)$, a $(1-\alpha)$ confidence interval for $f(x)$ is given by
\begin{equation}\label{eqn:proposedCI}
\left[\widehat{f}(x)-Q_{1-\alpha/2}(x), \widehat{f}(x)-Q_{\alpha/2}(x)\right].
\end{equation}
In practice, \TabCon\ replaces the unknown quantiles with their learned estimates. The following proposition establishes the coverage validity of this oracle construction.

\begin{prop}
The CI in \eqref{eqn:proposedCI} achieves coverage probability $1-\alpha$ for $f(x)$.
\end{prop}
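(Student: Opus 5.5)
The argument is the standard pivot inversion. Write $E=\widehat{f}(x)-f(x)$ for the estimation error, where $x$ is fixed and the randomness comes from $\mathcal{D}$. Let $F$ be its distribution function, and let $Q_\beta(x)=\inf\{t: F(t)\ge\beta\}$ be its $\beta$th quantile. The first step rewrites the coverage event in terms of $E$ alone:
\begin{equation*}
\widehat{f}(x)-Q_{1-\alpha/2}(x)\le f(x)\le \widehat{f}(x)-Q_{\alpha/2}(x)
\iff Q_{\alpha/2}(x)\le E\le Q_{1-\alpha/2}(x).
\end{equation*}
This is pure algebra: subtract $\widehat{f}(x)$ from each side and multiply by $-1$. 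The quantiles are deterministic functions of the DGP and do not depend on $\mathcal{D}$, so the only random quantity is $E$.

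The second step computes the probability of this event:
\begin{equation*}
\mathbb{P}\bigl(Q_{\alpha/2}\le E\le Q_{1-\alpha/2}\bigr)=F(Q_{1-\alpha/2})-\mathbb{P}(E<Q_{\alpha/2}).
\end{equation*}
The definition of the quantile gives $F(Q_{1-\alpha/2})\ge 1-\alpha/2$. Right-continuity and minimality of the infimum give $\mathbb{P}(E<Q_{\alpha/2})\le \alpha/2$. Together these yield coverage at least $1-\alpha$, which is the requirement in \eqref{CI:coverage}.

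The third step upgrades this to exactly $1-\alpha$, as the proposition states. I would assume $E$ has a continuous distribution, for instance when $Y$ has a continuous noise component and $\widehat{f}$ is a smooth functional of the data. Under this assumption $F(Q_\beta)=\beta$ and $\mathbb{P}(E<Q_\beta)=\beta$, so the probability equals $(1-\alpha/2)-\alpha/2=1-\alpha$.

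The argument is not technically hard. The main point needing care is stating precisely what is random: the quantiles must be those of the sampling distribution of $\widehat{f}(x)-f(x)$ under the fixed DGP, with $x$ held fixed (or, for random $X_{n+i}$, conditional on it). The second point is the ties issue. Without continuity the conclusion holds only as the inequality $\ge 1-\alpha$, and I would note this caveat explicitly.
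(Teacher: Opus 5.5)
Your proof is correct and follows the same route as the paper's: rewrite the coverage event as $Q_{\alpha/2}(x)\le \widehat{f}(x)-f(x)\le Q_{1-\alpha/2}(x)$, then use continuity of the error distribution to get $F_x(Q_\beta(x))=\beta$. Your version is slightly more careful in two respects: you subtract the left limit $\mathbb{P}(E<Q_{\alpha/2})$ rather than $F_x(Q_{\alpha/2}(x))$, which is only valid when there is no atom at $Q_{\alpha/2}(x)$, and you record the conservative bound $\ge 1-\alpha$, which holds with no continuity assumption at all.
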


The proof follows directly from the definition of the quantiles and is provided in Appendix \ref{app:proof}. After pretraining, we further employ post-training to calibrate the learned quantiles toward the desired coverage level while reducing interval length; see \Cref{sec:method} for details.

\begin{figure*}[t]
\centering
\includegraphics[width=\linewidth]{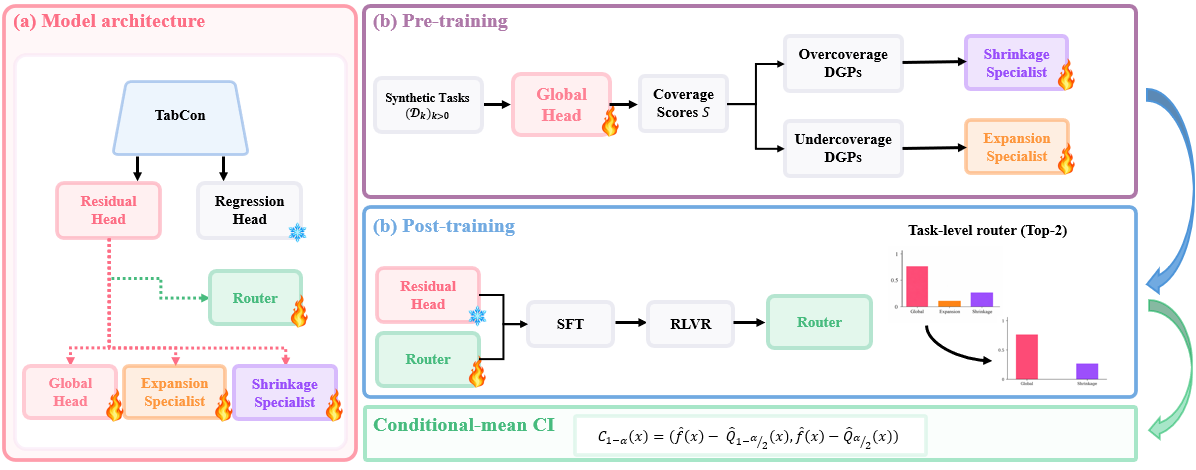}
\caption{An overview of \textbf{TabCon}. Panel (a) illustrates the model architecture, which consists of (i) a frozen \ul{\textit{regression head}} that produces point estimates of the regression function, (ii) three trainable residual heads that model the estimation error distribution, including a \ul{\textit{global head}} and two \ul{\textit{specialists}}, and (iii) a trainable \ul{\textit{router}} that determines how the residual heads are combined. Panels (b) illustrate the training and inference pipeline, which follows the pretraining and post-training paradigm of modern foundation models. At inference time (bottom), the router weights combine the outputs of the residual heads to 
construct the CI according to \eqref{eqn:proposedCI}.}
\label{fig:tabcon-ci}
\end{figure*}

\textbf{Model architecture}. \TabCon\ consists of three components: (i) a regression head that produces a point estimate of the regression function; (ii) three residual heads that estimate distribution of the estimation error, with each head specializing in a different class of data generating processes; and (iii) a router that adaptively selects the residual heads most appropriate for the current dataset. Figure \ref{fig:tabcon-ci}(a) provides a graphical overview of the model architecture. We describe each component in detail:
\begin{enumerate}[leftmargin=*]
    \item[(i)] \ul{\textit{The regression head}} takes the observed dataset $\mathcal{D}$ and a set of future covariate values as input, and produces a point estimate $\widehat{f}(x)$ for each queried covariate value $x$. It can either be a frozen pretrained tabular foundation model or trained jointly with the rest of \TabCon. In our implementation, we use a frozen TabPFN-v3 model as the regression head.
    
    \item[(ii)] \ul{\textit{The residual heads}} take the dataset $\mathcal{D}$ and a set of future covariate values as input, and output, for each queried covariate value, a discretized distribution over the estimation error represented by a set of bins. \TabCon\ contains three residual heads: (a) a global head, (b) an expansion specialist, and (c) a shrinkage specialist. All three share the TabPFN-v3 architecture and are trained to model the estimation error, but on different classes of DGPs: (a) The global head is trained on the full collection of DGPs. (b) The expansion specialist is trained primarily on DGPs for which the baseline CI produced by the global head tends to undercover, with the goal of widening the interval toward the nominal coverage level. (c) Conversely, the shrinkage specialist is trained primarily on DGPs for which the baseline CI tends to overcover, with the goal of shortening the interval while preserving the desired coverage level. 
    
    \item[(iii)] \ul{\textit{The router}} is another transformer model that takes as input the final-layer representations from all residual heads across the queried covariate values and outputs a (sparse) weight vector used to combine their predictions. It employs attention to aggregate information across queried covariates into a set of routing logits, which are then transformed through a softmax function to obtain the mixing weights. To save space, we defer the detailed network architecture to Appendix \ref{app:router_details}. 
\end{enumerate}
We describe the training procedures for the residual heads and the router in the next section. 

\section{Model Training}\label{sec:method}
\textbf{Overview}. The training of \TabCon\ follows a two-stage pretraining and post-training paradigm, similar in spirit to the training pipeline of modern large language models (LLMs). During \textit{pretraining}, we train the residual heads to model the distribution of estimation errors using a supervised objective. During \textit{post-training}, we train the router while keeping the residual heads fixed. Post-training itself consists of two phases: we first initialize the router through \textit{supervised fine-tuning} (SFT), using a loss function similar to the pretraining objective, and then further optimize the routing policy through \textit{reinforcement learning with verifiable rewards} (RLVR) to directly target the performance of the resulting CIs, improving coverage while potentially reducing interval length. We detail each training stage below.

\textbf{Pretraining}. Following TabPFN, we construct a diverse collection of simulated DGPs and generate 50,000 datasets for pretraining. The DGPs are designed following \citet{grinsztajn2025tabpfn}. We first train the global head on the 50,000 datasets to learn the distribution of the estimation error. To represent this distribution, we discretize the estimation error over a wide range of bins and train the model to predict the probability mass assigned to each bin. The parameters of the global head are optimized by minimizing
\begin{equation}\label{eqn:negative}
\ell_{\mathrm{global}}
=\ell_{\mathrm{NLL}}+\eta_1 \ell_{\mathrm{CRPS}},
\end{equation}
where $\ell_{\mathrm{NLL}}$ denotes the negative log-likelihood function (i.e., the cross-entropy loss) induced by the discretized estimation error distribution.   The second term $\ell_{\mathrm{CRPS}}$ in \eqref{eqn:negative} denotes the continuous ranked probability score \citep[CRPS,][]{gneiting2007strictly}, which measures the discrepancy between the predicted cumulative distribution and the distribution induced by the realized estimation error. The tuning parameter $\eta_1\geq 0$ controls the contribution of the CRPS term to the overall loss. 

Optimizing \eqref{eqn:negative} yields a learned discrete distribution over the bins, denoted by $p_{\textrm{global}}$, from which we recover the corresponding quantile function $Q_{\alpha}$. We next use these learned quantiles to categorize all DGPs into three coverage regimes, in order to train the two specialist heads. Specifically, for each DGP and significance level $\alpha$, we plug the estimated quantiles into \eqref{eqn:proposedCI} to construct CIs and evaluate their empirical coverage, averaged across datasets and future covariate values. Denote the resulting average coverage by $C_{\alpha}$. 
We repeat this procedure over a sequence of significance levels $\{\alpha_j\}_j$ and compare the average coverage $C_{\alpha_j}$ at each level with its nominal target $1-\alpha_j$. We then aggregate these deviations into a coverage score
\begin{equation*}
S=\sum_j \lambda_j\left[C_{\alpha_j}-(1-\alpha_j)\right],
\end{equation*}
where $\lambda_j\ge 0$ denotes the weight and satisfies $\sum_j\lambda_j=1$. In particular, we assign larger weights to smaller significance levels to prioritize coverage accuracy at higher nominal coverage levels.

The score $S$ summarizes the overall coverage performance of a DGP: lower values indicate undercoverage, whereas higher values indicate overcoverage. We rank all DGPs according to $S$ and partition them into three groups. DGPs below the $1/3$ quantile of the score distribution define the undercoverage regime and are primarily used to train the expansion specialist, whereas those above the $2/3$ quantile define the overcoverage regime and are primarily used to train the shrinkage specialist. 

Specifically, we consider the following loss function for training each specialist,
\begin{eqnarray}\label{eqn:specialist}
\ell_{\mathrm{specialist}}=\ell_{\mathrm{NLL}}
+\eta_1 \ell_{\mathrm{CRPS}}+\eta_2 \textrm{KL}(p_{\textrm{global}},p_{\textrm{specialist}}).
\end{eqnarray}
The objective function in \eqref{eqn:specialist} is closely related to that in \eqref{eqn:negative}, but differs in two important ways: 
\begin{enumerate}[leftmargin=*]
    \item[(i)] The negative log-likelihood and CRPS terms are evaluated only on datasets from the DGP regime associated with the corresponding specialist. For example, the expansion specialist is trained on datasets generated from DGPs whose coverage scores fall below the $1/3$ quantile of the score distribution. This specialization allows each head to learn the estimation error distribution more accurately within its corresponding coverage regime.
    \item[(ii)] Equation \ref{eqn:specialist}  includes an additional Kullback-Leibler (KL) regularization term, which is evaluated on datasets generated from the full collection of DGPs. This term penalizes excessive deviation of the specialist distribution $p_{\textrm{specialist}}$ from the global distribution $p_{\textrm{global}}$, in order to preserve the generalization ability inherited from the global head. The tuning parameter $\eta_2>0$ controls the strength of this regularization. Similar KL regularization is widely used in post-training LLMs \citep[e.g.,][]{ouyang2022training}.
\end{enumerate}

\textbf{Post-training}. During post-training, we keep all residual heads fixed and optimize only the parameters of the router, whose output is a three-dimensional weight vector $(w_1,w_2,w_3)$ satisfying $w_j\geq 0$ and $\sum_{j=1}^3 w_j=1$. These weights combine the three residual distributions to form the final estimation error distribution
\begin{eqnarray}\label{eqn:finalprob}
p=w_1 p_{\textrm{global}}+w_2 p_{\textrm{expansion}}+w_3 p_{\textrm{shrinkage}},
\end{eqnarray}
where, with a slight abuse of notation, $p_{\textrm{expansion}}$ and $p_{\textrm{shrinkage}}$ denote the learned probability mass functions of the estimation error produced by the two specialist heads. 

Post-training consists of two phases. In \ul{\textit{Phase I}}, we again generate 50,000 datasets from the full collection of DGPs and train the router through SFT by minimizing
\begin{eqnarray}\label{eqn:SFTloss}
\ell_{\textrm{SFT}}
=
\ell_{\textrm{NLL}}
+
\eta_3 \ell_{\textrm{CRPS}},
\end{eqnarray}
where $\eta_3>0$ is a tuning parameter. This objective is again, very similar to the pretraining objective in \eqref{eqn:negative}, except that both the negative log-likelihood and CRPS are now evaluated with respect to the final estimation error distribution $p$ in \eqref{eqn:finalprob}. Because $p$ is determined by the router weights, minimizing this objective directly trains the router to properly combine the three residual heads.

However, the SFT objective suffers from two important limitations. (i) Minimizing \eqref{eqn:SFTloss} does not necessarily drive any routing weights to zero and therefore does not enforce sparse routing, which we find crucial for achieving the desired coverage. (ii) Although the negative log-likelihood and CRPS measure the quality of the learned estimation error distribution, they do not directly optimize the coverage probability and interval length, by which CIs are ultimately evaluated. These limitations motivate a second post-training phase based on RL, which allows us to optimize a sparse routing policy toward these inferential objectives.

Specifically, in \ul{\textit{Phase II}}, we generate 100,000 datasets and optimize the router parameters using an RLVR algorithm, RLOO \citep{ahmadian2024back}. In this RL formulation, the state corresponds to the current dataset, the action is the sparse routing weight vector that determines which residual head or heads are selected and how their outputs are combined, and the reward evaluates the quality of the resulting CI in terms of coverage and interval length. The router parameters define the policy over routing actions and are optimized to maximize the expected reward.

To address limitation (i), we explicitly sparsify each routing weight vector produced by the router. Specifically, we set its smallest component to zero and renormalize the remaining two components so that they sum to one. The resulting routing vector therefore always contains at least one zero entry.

To address limitation (ii), we design a reward function with the following four components:  
\begin{enumerate}[leftmargin=*]
    \item We reward reductions in interval length produced by the global head, encouraging shorter confidence intervals whenever possible.
    \item We penalize deviations of the empirical coverage from its nominal level using an asymmetric criterion that assigns a substantially larger penalty to undercoverage than to overcoverage.
    \item We impose an additional penalty whenever the resulting coverage is worse than that achieved by the global head, preventing the routing policy from worsening the coverage performance.
    \item Finally, we penalize highly asymmetric CIs, in which the lower and upper endpoints correspond to strongly unbalanced tail probabilities. 
\end{enumerate}
Due to space constraints, we defer the detailed specification of the reward function to Appendix \ref{app:pt_details}.

\section{Experiments}\label{sec:experiment}
\textbf{Datasets}. We conduct extensive experiments in this section to evaluate the performance of \TabCon\ on both synthetic and semi-synthetic data. For the synthetic evaluation, we consider a rich collection of 64 DGPs. Among them, 32 are also employed during the training of \TabCon\ and are used for in-distribution evaluation, while the remaining 32 are unseen during training and are used for out-of-distribution evaluation. Importantly, even for the in-distribution setting, all evaluation datasets are generated independently from the training data, so {\bf no} test dataset is observed during training.

We further divide the synthetic DGPs into two benchmark settings: a \ul{\emph{standard}} setting covering a broad range of commonly encountered DGPs, and a \ul{\emph{stress-test}} setting consisting of deliberately challenging DGPs designed to evaluate the robustness of \TabCon\ under more difficult data regimes. 

Finally, we construct a semi-synthetic benchmark based on real feature matrices from 13 OpenML-\ul{\textit{CC18}} datasets \citep{bischl2021openml}. We retain these covariates but generate responses from synthetic regression functions with noise. This semi-synthetic design combines realistic covariates from real-world datasets with known regression functions, the latter of which allow us to evaluate CI coverage against the ground truth.

\textbf{Baseline algorithms}. We compare \TabCon\ against five methods for CI construction, including two based on normal approximations and three built on TabPFN that represent different approaches to uncertainty quantification with existing tabular foundation models. Specifically:  
\begin{enumerate}[leftmargin=*]
    \item \ul{\textit{Linear}} fits a linear regression model, estimates the covariance matrix of the regression coefficients, and constructs Wald-type CIs for the regression function. 
    \item \ul{\textit{RFF}} augments the covariates with random Fourier features to accommodate nonlinear relationships and then applies the same Wald-type CI construction as Linear.
    \item \ul{\textit{Bootstrap}} combines the standard nonparametric bootstrap \citep{efron1979bootstrap} with TabPFN-v3 point estimation. To reduce inference-time computation, we generate only 50 bootstrap samples for each observed dataset, yielding 50 TabPFN-v3 point estimates. We then use their empirical distribution to construct the corresponding CI.
    \item \ul{\textit{PI}} directly uses the predictive distribution returned by TabPFN-v3. We extract its $\frac{\alpha}{2}$ and $(1-\frac{\alpha}{2})$ quantiles to construct a prediction interval, which is then used as a CI for the regression function.
    \item \ul{\textit{AMP}} adopts the approximate martingale posterior approach of \citet{nagler2025martingale} that combines TabPFN-v3's prediction with martingale posterior distribution (MPD) to construct CIs. Following \citet{nagler2025martingale}, we use 50 chains with 50 forward samples per chain.
\end{enumerate}
In our ablation study, we further compare \TabCon\ with two variants: one constructs CIs using only the pretrained global head (denoted by \ul{\textit{Global}}), while the other uses the mixture distribution learned after the SFT stage (denoted by \ul{\textit{SFT}}).

\textbf{Evaluation metrics}. We evaluate each CI construction method in terms of coverage probability (CP), interval length (IL), and inference time. In this section, we focus on the setting with significance level $\alpha=5\%$, corresponding to a nominal coverage level of $95\%$. Results for $\alpha=1\%$ and $\alpha=10\%$, corresponding to nominal coverage levels of $99\%$ and $90\%$, respectively, are reported in Appendix \ref{app:add_emp_results}.

\textbf{Coverage and interval length}. In Table \ref{tab:tabcon-main-results}, we report the average CP and IL across queried covariate values and evaluation datasets under four settings: standard in-distribution, standard out-of-distribution, stress-test, and CC18. We additionally visualize CP and IL separately for each DGP in the stress-test setting in Figure \ref{fig:stress-radar-95}, with detailed descriptions of the DGPs and the corresponding CP and IL values provided in Appendix \ref{app:impl_details}.

\begin{table}[t]
    \centering
    \caption{Coverage probability (CP) and interval length (IL) at the 95\% nominal coverage level. From left to right, 
    results are reported for the standard in-distribution (ID), standard out-of-distribution (OOD),
    stress-test, and CC18 settings. For CP, the value closest to the nominal level of 0.95 is shown in \textbf{bold}, and the second closest is \ul{underlined}.
    For IL, among methods satisfying $\mathrm{CP}\geq 0.95$, the shortest interval is shown in \textbf{bold}, and the second shortest is \ul{underlined}.}
    \label{tab:tabcon-main-results}

    \setlength{\tabcolsep}{6pt}
    \renewcommand{\arraystretch}{1.10}
    \small

    \begin{tabular}{lcccccccc}
        \toprule
        Method 
        & \multicolumn{2}{c}{Standard (ID)}
        & \multicolumn{2}{c}{Standard (OOD)}
        & \multicolumn{2}{c}{Stress-test}
        & \multicolumn{2}{c}{CC18} \\
        \cmidrule(lr){2-3}
        \cmidrule(lr){4-5}
        \cmidrule(lr){6-7}
        \cmidrule(lr){8-9}
        & CP & IL
        & CP & IL
        & CP & IL
        & CP & IL \\
        \midrule

        Linear
        & 0.8653 & 8.5699
        & 0.6814 & 2.5411
        & 0.7673 & 6.9267
        & 0.5377 & 78.0342 \\

        RFF
        & 0.8511 & 4.0549
        & 0.8029 & 2.5039
        & 0.7381 & 2.6404
        & 0.6238 & 0.7514 \\

 PI
        & 0.9792 & \ul{4.0261}
        & 0.9695 & \ul{2.1793}
        & 0.9902 & \ul{3.2091}
        & 0.9953 & 2.6531 \\

        Bootstrap
        & 0.5935 & 0.9200
        & 0.6820 & 0.8107
        & 0.6970 & 0.6651
        & 0.7328 & 0.4084 \\

        AMP
        & \ul{0.9369} & 3.9612
        & \ul{0.9351} & 2.0706
        & \ul{0.9729} & 3.2663
        & \ul{0.9818} & \ul{2.4117} \\

        TabCon
        & \textbf{0.9548} & \textbf{2.6824}
        & \textbf{0.9531} & \textbf{1.9288}
        & \textbf{0.9561} & \textbf{1.8581}
        & \textbf{0.9654} & \textbf{1.0686} \\

        \bottomrule
    \end{tabular}
\end{table}

\begin{figure}[t]
    \centering
    \includegraphics[width=0.85\linewidth]
    {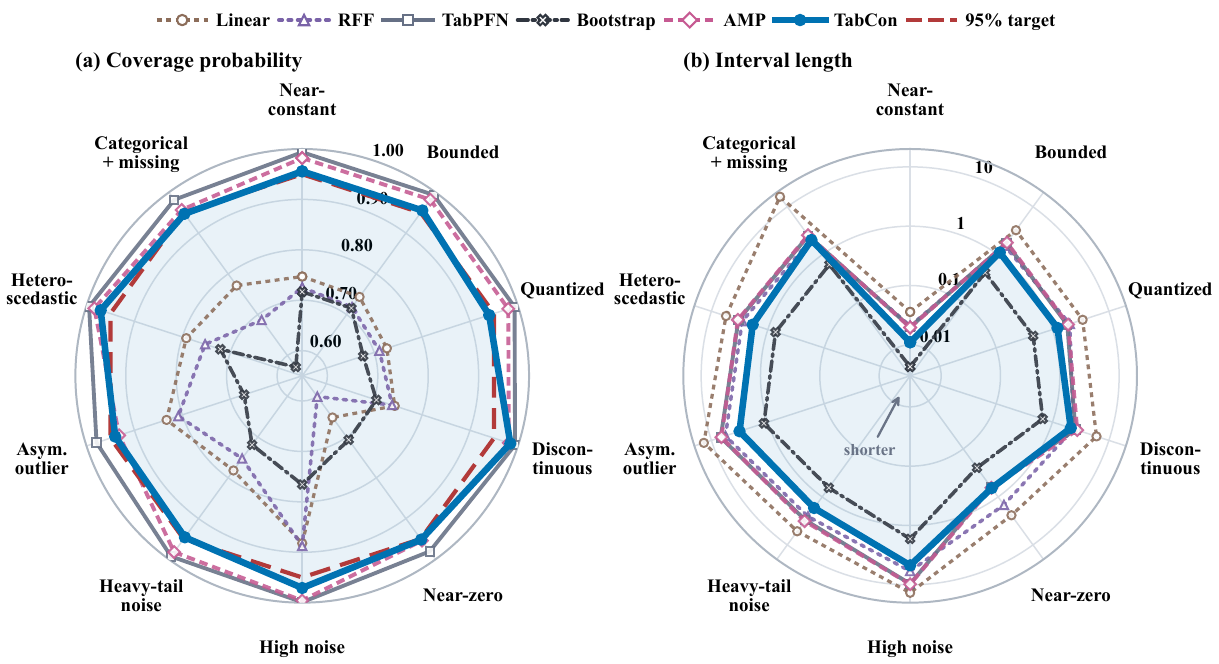}
    \caption{Radar plots of CP (left) and IL (right) across ten stress-test DGPs at the 95\% nominal coverage level. In the left panel, the radial levels represent coverage from 60\% to 100\%, increasing from the center outward. The dashed red polygon marks the nominal 95\% coverage level, with curves closer to this reference indicate better coverage. In the right panel, the radial levels correspond to IL values from 0.01 to 10 on a logarithmic scale. Curves closer to the center indicate shorter intervals.}
    \label{fig:stress-radar-95}
\end{figure}

We make the following observations:
\begin{enumerate}[leftmargin=*]
    \item \TabCon\ delivers the most competitive performance across all evaluation settings. As shown in Table \ref{tab:tabcon-main-results}, it achieves coverage closest to the nominal level of 95\% in every setting while maintaining empirical coverage above 95\%. Looking across individual DGPs in the stress-test setting, \TabCon\ slightly undercovers in a few cases, but its coverage remains above 94\% throughout (Figure \ref{fig:stress-radar-95}). Meanwhile, among methods achieving sufficient coverage, \TabCon\ produces the shortest intervals in nearly all settings, striking a favorable balance between coverage and interval length. 
    \item Normal approximation methods (Linear and RFF) perform poorly. They substantially undercover in both standard settings, CC18 and across nearly all individual DGPs in the stress-test setting. One important source of this undercoverage is approximation bias induced by the linear working models: misspecification of the regression function can lead to biased point estimates that are not accounted for by the resulting Wald intervals \citep{white1982maximum}. This interpretation is also consistent with the high-noise setting (Figure \ref{fig:stress-radar-95}), where coverage increases to approximately $0.88$ for both methods. As the noise level increases, the resulting intervals become wider and the approximation bias becomes smaller compared with the overall sampling uncertainty. 
    \item Bootstrap methods also substantially undercover in most settings. This suggests that classical resampling procedures do not automatically inherit their validity when combined with modern tabular foundation models. One plausible explanation is that the highly nonlinear nature of the TabPFN estimator makes its sampling distribution difficult to approximate through naive resampling, resulting in intervals that are systematically too narrow.
    \item PI, by contrast, overcovers in every setting and produces intervals that are {\bf{13}}\%–{\bf{148}}\% longer than those produced by \TabCon. This is expected because TabPFN’s predictive distribution captures not only uncertainty in the estimated regression function but also outcome variability.
    \item Among the five baselines, AMP performs the strongest overall, but it tends to undercover in the standard settings and overcover in CC18 as well as many individual DGPs in the stress-test setting. Moreover, its intervals are {\bf 7}\%–{\bf{126}}\% longer than those produced by \TabCon.
\end{enumerate}

\textbf{Computational efficiency}. We report the median inference time of \TabCon\ and the baseline methods in Table~\ref{tab:runtime-grid-new-task}. All runtimes are measured on an NVIDIA RTX 6000 Ada Generation GPU for context sizes
$C\in\{64,128,256,512,1{,}024\}$ and query-batch sizes
$Q\in\{64,256,1{,}024\}$. As expected, the simple normal approximation baselines are computationally efficient. However, as shown above, this benefit comes with substantial undercoverage and, in some settings, excessively long intervals, limiting their practical utility.

Among methods based on tabular foundation models, \TabCon\ has an inference time comparable to that of PI based on TabPFN-v3. Moreover, the inference times of both methods remain nearly constant as the numbers of context and query points increase. As shown earlier, however, \TabCon\ provides substantially better coverage and much shorter intervals than PI. A more detailed decomposition of the runtime is provided in
Appendix~\ref{app:runtime-decomposition}. 

AMP is more computationally expensive, requiring approximately ${\bf 25}$\%--${\bf 108}\%$ more times than \TabCon\ across the three query sizes. Its runtime also increases substantially as the number of query points grows from $256$ to $1{,}024$, reflecting the additional predictive sampling required. 

Finally, the bootstrap is by far the most computationally expensive baseline. Despite using only 50 bootstrap refits, it is approximately ${\bf 50}$ times slower than \TabCon\ across the three query sizes.
\begin{table*}[t]
  \centering
  \caption{Median runtime in seconds, over 30 repetitions. $C$ and $Q$ denote the numbers of context and query points.}
  \label{tab:runtime-grid-new-task}
  \setlength{\tabcolsep}{4.0pt}
  \renewcommand{\arraystretch}{1.07}
  \scriptsize
  \begin{tabular}{rrrrrrrrrr}
    \toprule
    $C$ & $Q$ & Linear & RFF & PI & Global & Bootstrap & AMP & \TabCon \\
    \midrule
    64 & 64 & 0.0024 & 0.0025  & 0.2089 & 0.1978 & 10.5411 & 0.2579 & 0.2036 \\
     & 256 & 0.0024 & 0.0025 &  0.2108 & 0.1990 & 10.6450 & 0.2742 & 0.2072 \\
     & 1024 & 0.0026 & 0.0028 & 0.2185 & 0.2109 & 10.9722 & 0.4773 & 0.2297 \\
    \midrule
    128 & 64 & 0.0025 & 0.0028  & 0.2103 & 0.1986 & 10.5650 & 0.2595 & 0.2054 \\
     & 256 & 0.0025 & 0.0029  & 0.2119 & 0.1999 & 10.6264 & 0.2758 & 0.2082 \\
     & 1024 & 0.0026 & 0.0033 & 0.2214 & 0.2139 & 11.0762 & 0.4803 & 0.2320 \\
    \midrule
    256 & 64 & 0.0026 & 0.0063 & 0.2128 & 0.2010 & 10.6597 & 0.2619 & 0.2072 \\
     & 256 & 0.0026 & 0.0065 & 0.2137 & 0.2017 & 10.7106 & 0.2772 & 0.2097 \\
     & 1024 & 0.0027 & 0.0077 & 0.2236 & 0.2160 & 11.1982 & 0.4821 & 0.2340 \\
    \midrule
    512 & 64 & 0.0028 & 0.0108  & 0.2176 & 0.2060 & 10.9422 & 0.2670 & 0.2120 \\
     & 256 & 0.0028 & 0.0112  & 0.2187 & 0.2064 & 10.9773 & 0.2825 & 0.2146 \\
     & 1024 & 0.0029 & 0.0123 & 0.2265 & 0.2192 & 11.3492 & 0.4859 & 0.2376 \\
    \midrule
    1024 & 64 & 0.0031 & 0.0158 & 0.2211 & 0.2095 & 11.1274 & 0.2702 & 0.2155 \\
     & 256 & 0.0032 & 0.0159 & 0.2243 & 0.2129 & 11.3183 & 0.2882 & 0.2201 \\
     & 1024 & 0.0033 & 0.0175 & 0.2299 & 0.2224 & 11.5502 & 0.4890 & 0.2403 \\
    \bottomrule
  \end{tabular}
\end{table*}

\textbf{Ablation study}. Finally, we compare \TabCon\ with its two variants, Global and SFT. 
Figure \ref{fig:component-ablation} shows that both variants can substantially under- or overcover for several individual DGPs despite achieving near-nominal average coverage across all DGPs. Compared with the global head alone, SFT benefits from the specialist heads, but does not consistently calibrate the intervals. To the contrary, \TabCon\ adaptively increases coverage when the two variants undercover, while reducing coverage, and consequently IL, when they overcover. These results demonstrate the benefits of the specialist heads and RL-based post-training for calibrating the resulting CIs toward the nominal coverage level.

\begin{figure*}[t]
    \centering
    \includegraphics[width=\textwidth]{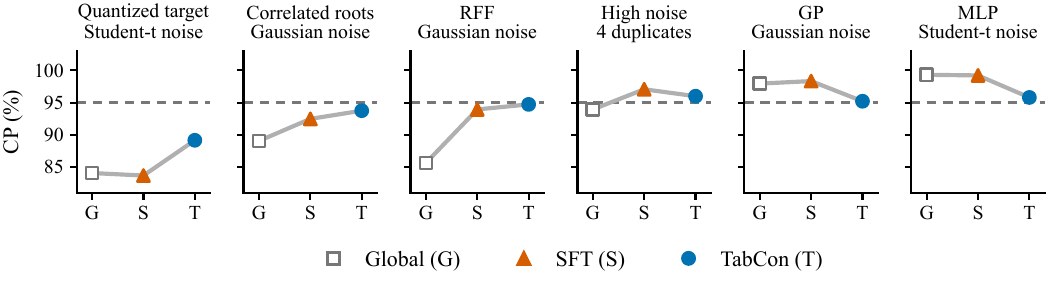}
    \caption{Ablation study on six selected DGPs in terms of coverage probability (CP). Dashed lines indicate the nominal coverage level of $95\%$.}
    \label{fig:component-ablation}
\end{figure*}

\section{Discussion}\label{sec:conclusion}
We presented \TabCon, a tabular foundation model for amortizing CI construction. Our results provide an initial demonstration that certain statistical inference tasks can be amortized through pretraining and post-training.  Statistical inference, however,  contains a much broader range of tasks beyond CI construction, including hypothesis testing. It remains an open question whether these tasks can similarly be amortized within a unified foundation model. 

\section*{AI use statement}

Generative AI tools were used to assist with code writing for technical implementation support and polish the writing of the manuscript. The development of the conceptual framework, the generation of the synthetic datasets, the design of the methodology and the experiments, data analysis and the interpretation of the results were developed and validated by the authors.  AI tools
were not used to formulate the mathematical results, to propose or refine research hypotheses, or to derive or write any of the proofs. The authors take full responsibility for all content of the paper.

\section*{Reproducibility statement}

To ensure the reproducibility of our results, we provide a detailed description of our experiments and methodological analysis in the main paper and the appendix. Section~\ref{sec:setup} introduces the construction of the confidence intervals and the overall model architecture used to estimate them directly. Section~\ref{sec:method} describes the training procedure, with particular emphasis on the pre-training and post-training stages. In Section \ref{sec:experiment}, we present the synthetic and semi-synthetic data-generation procedures together with the main experimental setup. Additional methodological details are provided in
Appendix \ref{app:alg_details}, including the router architecture and the design of the post-training procedure, while Appendix \ref{app:impl_details} reports the hyperparameters and implementation details of the methods considered in our experiments. Finally, the complete proof of our theoretical result is provided in Appendix~\ref{app:proof}.

\section*{Acknowledgements}
This work was supported by the UK Government’s AI Research Resource (AIRR) programme and undertaken using the Isambard-AI system.

\bibliography{references}
\bibliographystyle{plainnat}

\clearpage
\appendix
\setlength{\emergencystretch}{2em}

\section{Proof of Proposition 1}\label{app:proof}

\begin{proof}
Consider a covariate value $x$ and the estimation error $R(x) = \widehat{f}(x) - f(x)$. We begin by explicitly stating the decomposition of the true regression function $f(x)$, as
\begin{gather*}
    f(x) = \widehat{f}(x) - R(x),
\end{gather*}
where the first term on the RHS corresponds to the estimated regression function. Throughout, all probabilities are taken with respect to the randomness of $\mathcal{D}$. Let $F_x(r) = \mathbb{P}\bigl(R(x) \le r\bigr)$ denote the CDF of $R(x)$, and define the estimation error quantile function as
\begin{gather}
    Q_{\alpha}(x) = \inf\{r \in \mathbb{R} : F_x(r) \ge \alpha \}. \label{eq:quantile}
\end{gather}
Rearranging \eqref{CI:coverage} and the corresponding oracle confidence interval in \eqref{eqn:proposedCI}, we obtain
\begin{gather}
\mathbb{P}(\widehat{f}(x) - Q_{1- \alpha/2}(x) \le f(x) \le         \widehat{f}(x) - Q_{\alpha/2}(x))= \notag\\
    \mathbb{P}(Q_{\alpha/2}(x) \le R(x) \le Q_{1- \alpha/2}(x))= \notag\\
    F_x\bigl(Q_{1-\alpha/2}(x)\bigr) - F_x\bigl(Q_{\alpha/2}(x)\bigr). \label{eq:ci_cov}
\end{gather}

Assuming that the CDF $F_x$ is continuous at the relevant quantiles, the definition of $Q_\alpha(x)$ in \eqref{eq:quantile} implies
$F_x(Q_\alpha(x)) = \alpha$ for every $\alpha \in (0,1)$; see, e.g., \citet{van2000asymptotic}. Applying these two bounds to \eqref{eq:ci_cov} yields a coverage probability of $(1 - \alpha/2) - \alpha/2 = 1 - \alpha$.

\end{proof}

\section{Additional Methodological and Training Details}\label{app:alg_details}

This section provides additional details on the router architecture and its post-training procedure. After the Global and Specialized Heads are learned during pretraining, they are kept frozen throughout router optimization. The router relies only on the final query embeddings returned by the frozen \TabPFN-v3 backbone, which determine the task-dependent mixture weights assigned to the residual heads in \eqref{eqn:finalprob}.

A key feature of the architecture is that routing is performed at the task level, allowing the model to capture aspects of the complexity of the underlying regression problem. To this end, the representations across query points and ensemble members are progressively aggregated into a single task representation. We first describe this architectural construction and then provide further details on the post-training description in Section~\ref{sec:method}. The post-training procedure consists of SFT initialization in \ul{\textit{Phase I}} followed by reinforcement learning-based optimization in \ul{\textit{Phase II}}; here, we provide additional details on the reward used in the second phase to evaluate candidate routing actions across synthetic tasks.

\subsection{Router Architecture}\label{app:router_details}

The router architecture takes as input the embeddings produced by the frozen TabPFN-v3 ensemble members and outputs a sparse weight vector used to combine the residual distributions of the frozen residual heads (see paragraph \textbf{Model architecture} in Section \ref{sec:setup}). By aggregating information across both query and ensemble members dimensions, the router constructs a representation of the current task that allows the routing weights to adapt to task-specific complexity in CI construction. The router processes consist of five architectural blocks: (i) \textbf{Query-level encoding}, which maps each query representation into a shared router latent space through a learned encoder; (ii) \textbf{Query-set aggregation}, which summarizes the encoded representations obtained from (i) across the query dimension using multiple aggregation mechanisms; (iii) \textbf{Task-level aggregation}, which further aggregates information across the ensemble members to obtain a single task-level representation; (iv) \textbf{Routing probabilities}, which maps the task-level representation into a probability distribution over the three residual heads (i.e. the Global Head, Expansion Specialist, and Shrinkage Specialist); and (v) \textbf{Top-2 routing}, which sparsifies the resulting distribution by retaining only the two largest routing weights. We next describe each architectural block in detail.

\paragraph{Query-level encoding.} Starting from a collection of synthetic tasks generated from the DGPs described in Section~\ref{sec:method} and Appendix~\ref{app:impl_details}, with the number of generated datasets for each routing phase reported therein, we process each task through $M$ ensemble members of frozen tabular foundation models, instantiated as \TabPFN-v3. For a given task $k$, we extract the related embeddings $\mathbf E_k = \{\ve_{mqk}\}_{m=1,q=1}^{M,Q} \in \mathbb R^{M\times Q\times p}$, where $Q$ and $p$ represent respectively the query point and the embedding dimensions.

We first map each query embedding into a common router latent space of dimension $h$. Specifically, we employ a shared encoder $g_\psi:\mathbb R^p\rightarrow\mathbb R^h$, which takes as inputs the embeddings $\ve_{mqk}$ for each ensemble member and query point and provides a latent representation

\begin{equation*}
    \vh_{mqk} = g_\psi(\ve_{mqk}), \qquad \vh_{mqk}\in\mathbb R^h.
    \label{eq:router_query_encoder}
\end{equation*}
More explicitly, $g_\psi$ is implemented as a two-layer Multilayer Perceptron (MLP) with Gaussian Error Linear Unit (GELU) activation function preceded by layer normalization (LN), such that
\begin{align*}
    \widetilde{\ve}_{mqk}
    &= \operatorname{LN}(\ve_{mqk}),
    \\
    \vh^{(1)}_{mqk}
    &= \operatorname{GELU} \left(W^{(1)}_\psi \widetilde{\ve}_{mqk} + b^{(1)}_\psi \right),
    \\
    \vh_{mqk}
    &= \operatorname{GELU} \left(W^{(2)}_\psi \vh^{(1)}_{mqk} + b^{(2)}_\psi\right).
    \label{eq:router_query_encoder_explicit}
\end{align*}

\paragraph{Query-set aggregation.} In this phase, we aggregate the $Q$ query-level representations $\vh_{mqk}$ associated with each
ensemble member using two different pooling schemes. Firstly, we employ a learned attention-pooling mechanism based on $H$ trainable attention queries $\{\va_j\}_{j=1}^H$. Each attention query defines a scoring direction in the router latent space and therefore induces a different weighting of the same set of query representations. Specifically, for each ensemble member $m$ and attention query $j$, we compute the normalized attention weights
\begin{equation*}
  \alpha_{mjqk} = \softmax_q\! \left( \frac{\va_j^\top\vh_{mqk}}{\sqrt{h}}\right), \qquad \sum_{q=1}^{Q}\alpha_{mjqk}=1.
\end{equation*}
which determine the contribution of each query representation to the pooled summary
\begin{equation}
    \vu_{mjk} = \sum_{q=1}^{Q} \alpha_{mjqk}\vh_{mqk}, \qquad \vu_{mjk}\in\mathbb R^h.
    \label{eq:router_attention_pool}
\end{equation}
Following this strategy, we are able to obtain $H$ distinct weighted summaries of the same query set, allowing the router to capture various aspects of the query-level representations.

We further enrich the learned attention summaries in \eqref{eq:router_attention_pool} with two standard aggregation over the query dimension $Q$, i.e., the element-wise average $\bar{\vh}_{mk}$ and element-wise standard deviation $\boldsymbol{\sigma}^{(Q)}_{mk}$, yielding a final summary of the query set
\begin{equation}
    \mathbf{c}_{mk}= \left[\bar{\vh}_{mk}^{\top}, \left(\boldsymbol{\sigma}^{(Q)}_{mk}\right)^{\top}, \vu_{m1k}^{\top}, \ldots, \vu_{mHk}^{\top} \right]^{\top} \in \mathbb R^{(H+2)h}.
    \label{eq:router_estimator_concat}
\end{equation}
Finally, we employ a shared estimator-level
projection $f_\psi:\mathbb R^{(H+2)h}\rightarrow\mathbb R^h$ represented by a two-layer MLP 
\begin{align}
    \widetilde{\vs}_{mk}
    &=
    \operatorname{GELU} \left(W^{(3)}_\psi \mathbf{c}_{mk} + b^{(3)}_\psi \right), \notag\\
    \vs_{mk}
    &= \operatorname{GELU} \left(W^{(4)}_\psi \widetilde{\vs}_{mk} + b^{(4)}_\psi \right),
    \label{eq:router_estimator_projection}
\end{align}
and obtain an $h$-dimensional representation of the previous summaries reported in \eqref{eq:router_estimator_concat}.

\paragraph{Task-level aggregation.} At this point, to obtain a single representation for task $k$, we again use the element-wise average $\bar{\vs}_k$ and standard deviation $\boldsymbol{\sigma}^{(M)}_k$ of the last output in \eqref{eq:router_estimator_projection} across the $M$ ensemble members, obtaining
\begin{equation}
    \vz_k = \begin{bmatrix}
        \bar{\vs}_k \\
        \boldsymbol{\sigma}^{(M)}_k
    \end{bmatrix} \in\mathbb R^{2h}.
    \label{eq:router_task_embedding}
\end{equation}
\paragraph{Routing probabilities.} The previous task-level aggregation in \eqref{eq:router_task_embedding} is then passed through a two-layer routing network $r_\psi:\mathbb R^{2h}\rightarrow\mathbb R^E$, where $E$ denotes the number of residual experts. In our setting, as previously mentioned, we have 3 residual experts which corresponds to Global Head, Expansion Specialist and Shrinkage Specialist. The routing network is
\begin{align*}
    \mathbf{r}_k
    &= \operatorname{GELU} \left(W_\psi^{(5)}\vz_k+b_\psi^{(5)}\right),\\
    \boldsymbol{\ell}_k
    &= W_\psi^{(6)}\mathbf{r}_k+b_\psi^{(6)}, \qquad \boldsymbol{\ell}_k\in\mathbb R^E.
    \label{eq:router_logits_explicit}
\end{align*}
The dense routing probabilities are then obtained through
\begin{equation}
    \pi_{ke} = \frac{\exp(\ell_{ke}/\tau)}{\displaystyle\sum_{e'=1}^{E} \exp(\ell_{ke'}/\tau) }, \qquad  e=1,\ldots,E,
    \label{eq:router_dense_probabilities}
\end{equation}
where $\tau>0$ is the routing temperature. We use $\tau=1$ in the
reported experiments.

\paragraph{Top-2 routing.} This final operation sparsifies the dense routing probabilities in
\eqref{eq:router_dense_probabilities} by retaining only the two experts with the largest routing probabilities. For task $k$, we denote the set of selected experts by $\mathcal A_k=\operatorname{Top2}(\boldsymbol{\pi}_k) \subseteq\{1,\ldots,E\}$. The remaining experts are masked out, and the probabilities associated with the selected experts are renormalized to obtain the sparse routing weights
\begin{equation}
    w_{ke} = \frac{\pi_{ke}\, \mathbbm{1}\{e\in\mathcal A_k\}}{\displaystyle \sum_{e'\in\mathcal A_k} \pi_{ke'}}, \qquad e=1,\ldots,E.
    \label{eq:router_sparse_weights}
\end{equation}
This construction forces the router to concentrate its decision on a small subset of experts, rather than distributing probability mass over the entire mixture, while preserving a valid convex combination of the selected expert distributions. Consequently,
\begin{equation*}
    w_{ke}=0 \quad\text{for }e\notin\mathcal A_k, \qquad \sum_{e=1}^{E} w_{ke}=1, \qquad \|{\mathbf w}_k\|_0=2.
    \label{eq:router_sparse_properties}
\end{equation*}
In our setting, where $E=3$, Top-2 routing therefore assigns zero weight to the expert with the smallest dense routing probability and combines the remaining two experts.

\subsection{Post-training Reward Function}\label{app:pt_details}

As described in Section~\ref{sec:method}, the post-training stage refines the router by evaluating the empirical properties of the confidence intervals induced by candidate routing actions sampled around the current routing policy. Specifically, we design the reward on the basis of the empirical coverage of these intervals while simultaneously controlling their length. Balancing these two quantities is essential, since rewarding coverage alone may encourage excessively wide intervals leading to overcoverage, whereas encouraging short intervals too strongly may lead to undercoverage. We thus decompose the total reward $r_{\mathrm{total}}$ into six components, namely $r_{\mathrm{WIL}}$, $r_{\mathrm{WIS}}$, $r_{\mathrm{CS}}$, $r_{\mathrm{tail}}$, $r_{\mathrm{CVaR}}$, and $r_{\mathrm{excess}}$, which control the weighted interval length (WIL), the weighted interval score (WIS), the coverage score (CS), the upper and lower tails (tail), the undercoverage (CVaR), and the excess width (excess), respectively. The total reward is defined as
\begin{gather}
    r_{total} = \gamma_{1}\,r_{WIL} + \gamma_{2}\,r_{WIS} - \gamma_{3}\,r_{CS} - \gamma_{4}\,r_{tail} - \gamma_{5}\,r_{CVaR} - \gamma_{6}\,r_{excess}, \label{eqn:reward}
\end{gather}
where the coefficients $\gamma_{l} \ge 0$, for $l \in \{1,2,3,4,5,6\}$, control the contribution of each component. We describe each component in detail below.

\paragraph{Weighted interval length.} To encourage sharper confidence intervals, we consider their weighted average length across the nominal coverage levels that belong to a set $\mathcal C$. For a given task $k$, we define
\begin{equation}
    WIL_k = \sum_{\ell=1}^{|\mathcal C|} \nu_\ell \left\{\frac{1}{Q} \sum_{q=1}^{Q} \left[U_{k\ell q}-L_{k\ell q} \right] \right\},
    \label{eq:wil}
\end{equation}
where $\nu_\ell$ is the weight associated with nominal coverage level $c_\ell$, and $U_{k\ell q}$ and $L_{k\ell q}$ denote the confidence interval bounds for query point $q$.

We evaluate \eqref{eq:wil} under two residual distributions for the same task. First, $WIL_k^{(G)}$ is obtained by constructing the confidence intervals defined in \eqref{eqn:proposedCI} from the Global residual distribution $p_G$, i.e., using the quantiles of the residual distribution produced by the Global Head alone. Second, for a routing action $a$, $WIL_k^{(a)}$ is computed from the confidence intervals induced by the corresponding sparse MoE residual distribution $p(a)$. In both cases, the weighted interval length is evaluated over the same query points and nominal coverage levels, so that their difference isolates the change in interval width introduced by the routing action. We then define the WIL reward as
\begin{equation*}
    r_{\mathrm{WIL},k}(a) = \frac{WIL_k^{(G)}-WIL_k^{(a)}}{|WIL_k^{(G)}|}.
    \label{eq:reward_wil}
\end{equation*}
Therefore, the Global Head provides a task-specific reference against which each routing action is evaluated. Positive values are related to shorter intervals than those produced by the Global Head, whereas negative values indicate wider intervals.

\paragraph{Weighted interval score.} While the WIL only measures interval width, the weighted interval score additionally accounts for whether the true regression function lies within the confidence interval. For a nominal coverage level $c_\ell\in\mathcal C$, let $\alpha_\ell=1-c_\ell$. Given an interval $[L,U]$ for an oracle regression function $z= f(x)$, its interval score is defined as
\begin{equation*}
    \mathrm{IS}_{\alpha}(L,U;z) = (U-L) + \frac{2}{\alpha}(L-z)\mathbbm{1}\{z<L\} + \frac{2}{\alpha}(z-U)\mathbbm{1}\{z>U\}.
    \label{eq:is}
\end{equation*}
The score thus increases both with interval width and when the target falls outside the interval. Since the latter penalties are scaled by $2/\alpha$, coverage failures are penalized more heavily at higher nominal coverage levels.

Aggregating the interval scores across the coverage levels and query points, we define the WIS for task $k$ as
\begin{equation}
    WIS_k = \frac{1}{Q\left(|\mathcal C|+\frac{1}{2}\right)} \sum_{q=1}^{Q} \left[\frac{1}{2}|z_{kq}-m_{kq}| + \sum_{\ell=1}^{|\mathcal C|} \frac{\alpha_\ell}{2} \mathrm{IS}_{\alpha_\ell} \left(L_{k\ell q}, U_{k\ell q}; z_{kq} \right) \right],
    \label{eq:wis}
\end{equation}
where $m_{kq}$ denotes the median of the residual distribution under consideration and $z_{kq}$ is the oracle regression function for query point $q$.

Similarly to WIL, we evaluate \eqref{eq:wis} under both the Global and routed residual distributions. In particular, $WIS_k^{(G)}$ is obtained using the median and confidence intervals derived from the Global residual distribution $p_G$, whereas $WIS_k^{(a)}$ uses those induced by the sparse MoE residual distribution associated with routing action $a$. Since both quantities are evaluated on the same task, query points, and nominal coverage levels, we define the reward as
\begin{equation*}
    r_{\mathrm{WIS},k}(a) = \frac{WIS_k^{(G)}-WIS_k^{(a)}}{|WIS_k^{(G)}|}.
    \label{eq:reward_wis}
\end{equation*}
Hence, positive values encourage routing actions that improve the interval score with respect to the Global Head, whereas negative values indicate a deterioration.

\paragraph{Coverage score.} While WIL and WIS prefer sharper and better-scoring intervals, they do not explicitly enforce calibration. Accordingly, we introduce a coverage component that penalizes routing actions whose empirical coverage deviates from the nominal level beyond a fixed tolerance. For task $k$, routing action $a$, and nominal coverage level $c_\ell\in\mathcal C$, we compute
\begin{equation*}
    \widehat{CP}_{k\ell}^{(a)} = \frac{1}{Q} \sum_{q=1}^{Q} \mathbbm{1} \left\{L_{k\ell q}^{(a)} \le f_k(x_{kq}) \le U_{k\ell q}^{(a)} \right\},
    \label{eq:empirical_coverage}
\end{equation*}
where $f_k(x_{kq})$ denotes the true regression function evaluated at query point $x_{kq}$. Rather than penalizing any finite-sample deviation from $c_\ell$, we allow a level-specific tolerance $\epsilon_\ell$ and define the under- and overcoverage violations as
\begin{equation*}
    d_{k\ell}^{-,(a)} =\left[c_\ell-\epsilon_\ell-\widehat{CP}_{k\ell}^{(a)}\right]_+,
    \qquad d_{k\ell}^{+,(a)} =\left[\widehat{CP}_{k\ell}^{(a)}-c_\ell-\epsilon_\ell \right]_+,
    \label{eq:coverage_violations}
\end{equation*}
where $[x]_+=\max(x,0)$. Hence, no coverage penalty is incurred as long as the empirical coverage remains within the interval
$[c_\ell-\epsilon_\ell,c_\ell+\epsilon_\ell]$.

The violations are weighted by adaptive dual variables, allowing the optimization to place greater emphasis on coverage levels for which calibration remains difficult. In addition, the penalty combines a global constraint, shared across all tasks, with a complexity-specific constraint associated with the bucket $b_k$ of task $k$. We define
\begin{align*}
    r_{\mathrm{CS},k}(a)
    &=
    \omega_G \sum_{\ell=1}^{|\mathcal C|} \left[\lambda_{G\ell}^{-} d_{k\ell}^{-,(a)} + \lambda_{G\ell}^{+} d_{k\ell}^{+,(a)}\right]
    \nonumber\\
    &\quad+
    (1-\omega_G) \sum_{\ell=1}^{|\mathcal C|} \left[\lambda_{b_k\ell}^{-} d_{k\ell}^{-,(a)} + \lambda_{b_k\ell}^{+} d_{k\ell}^{+,(a)}\right],
    \label{eq:coverage_score}
\end{align*}
where $\lambda^{-}$ and $\lambda^{+}$ denote the dual multipliers associated with under- and overcoverage, respectively. We set $\omega_G=0.5$, assigning equal weight to the global and complexity-specific coverage constraints.

\paragraph{Tail balance.}

Controlling the overall coverage does not ensure that the errors are distributed across the two tails. Indeed, an interval may attain coverage close to the nominal level while most of the missed values fall below the lower bound or above the upper bound. We therefore separately control the empirical lower- and upper-tail error probabilities. For a nominal coverage level $c_\ell$, a task $k$ and routing action $a$ and $\alpha_\ell=1-c_\ell$, we define
\begin{equation*}
    \widehat{T}_{k\ell}^{-,(a)} = \frac{1}{Q} \sum_{q=1}^{Q} \mathbbm{1} \left\{f_k(x_{kq})<L_{k\ell q}^{(a)} \right\}, \qquad \widehat{T}_{k\ell}^{+,(a)} = \frac{1}{Q} \sum_{q=1}^{Q} \mathbbm{1} \left\{f_k(x_{kq})>U_{k\ell q}^{(a)}\right\}.
    \label{eq:tail_probabilities}
\end{equation*}
For an equal-tailed confidence interval, the nominal error probability assigned to each tail is $\alpha_\ell/2$. Allowing for a level-specific tolerance $\tau_\ell$, we measure the corresponding deviations as
\begin{equation*}
    d_{k\ell}^{-,(a)} = \widehat{T}_{k\ell}^{-,(a)} - \frac{\alpha_\ell}{2} - \tau_\ell, \qquad d_{k\ell}^{+,(a)} = \widehat{T}_{k\ell}^{+,(a)} - \frac{\alpha_\ell}{2} - \tau_\ell.
    \label{eq:tail_violations}
\end{equation*}
Consequently, positive values indicate that the routing action assigns excessive error probability to the tail. Similarly to the coverage component, these deviations are weighted by adaptive dual variables at both the global and complexity-specific levels, yielding
\begin{equation*}
    r_{\mathrm{tail},k}(a) = \sum_{\ell=1}^{|\mathcal C|} \left[\lambda_{G\ell}^{-}d_{k\ell}^{-,(a)} + \lambda_{G\ell}^{+}d_{k\ell}^{+,(a)} + \lambda_{b_k\ell}^{-}d_{k\ell}^{-,(a)} + \lambda_{b_k\ell}^{+}d_{k\ell}^{+,(a)} \right].
    \label{eq:tail_balance}
\end{equation*}
This component discourages routing actions for which the miscoverage is concentrated disproportionately in either tail, alongside the control of overall coverage.

\paragraph{CVaR.} The previous coverage component controls deviations from the nominal coverage level, but does not explicitly account for tasks characterized by severe undercoverage. To control these cases, we introduce a Conditional Value-at-Risk (CVaR) penalty based on the undercoverage violation defined above. Specifically, for task $k$, routing action $a$, and nominal coverage level $c_\ell$, we consider
\begin{equation*}
    d_{k\ell}^{(a)} = \left[c_\ell-\epsilon_\ell-\widehat{CP}_{k\ell}^{(a)} \right]_+,
    \label{eq:cvar_deficit}
\end{equation*}
which is positive only when the empirical coverage falls below the tolerated nominal level.

Rather than penalizing these violations only through their average behaviour, CVaR places focus on the upper tail of their distribution across tasks. For a given scope $s$, either global or complexity-specific, we use the standard CVaR representation
\begin{equation*}
    C_{s,k\ell}^{(a)} = \eta_{s\ell} + \frac{1}{1-\beta} \left[d_{k\ell}^{(a)}-\eta_{s\ell}\right]_+,
    \label{eq:cvar_surrogate}
\end{equation*}
where $\eta_{s\ell}$ is an adaptive threshold and $\beta$ corresponds to the fraction of severe undercoverage events receiving more importance. We set $\beta=0.80$ and update $\eta_{s\ell}$ throughout post-training.

Similarly to the coverage score, we combine the global and complexity-specific components and aggregate them across nominal coverage levels,
\begin{equation*}
    r_{\mathrm{CVaR},k}(a) = \sum_{\ell=1}^{|\mathcal C|} \nu_\ell \left[\omega_G C_{G,k\ell}^{(a)} + (1-\omega_G)C_{b_k,k\ell}^{(a)}\right],
    \label{eq:cvar_penalty}
\end{equation*}
where $\omega_G=0.5$. This penalty places additional weight on routing actions associated with the most pronounced undercoverage, preventing good average calibration from masking poorly calibrated tasks.

\paragraph{Excess width.} Although WIL encourages shorter intervals, some widening may be necessary to correct undercoverage, especially for more complex tasks. We thus introduce an excess-width penalty that discourages unnecessary widening of the Global Head while allowing a complexity-specific margin. For task $k$ and routing action $a$, we first compute the weighted interval length
\begin{equation*}
    R_k^{(a)} = \sum_{\ell=1}^{|\mathcal C|}
    \nu_\ell \frac{\frac{1}{Q}\sum_{q=1}^{Q} \left[U_{k\ell q}^{(a)}-L_{k\ell q}^{(a)}\right]}{\frac{1}{Q}\sum_{q=1}^{Q} \left[U_{k\ell q}^{(G)}-L_{k\ell q}^{(G)}\right]}.
    \label{eq:relative_width}
\end{equation*}
The excess width is then measured against a complexity-specific budget $b_k$ as
\begin{equation*}
    e_k^{(a)}=\left[R_k^{(a)}-1-b_k \right]_+.
    \label{eq:excess_width}
\end{equation*}
We use $b_k\in\{0,0.03,0.25\}$ for low-, medium-, and high-complexity tasks, respectively, allowing progressively more widening as task complexity increases. Since widening may be beneficial when coverage is insufficient, the resulting penalty is downweighted when the routing action exhibits undercoverage, while remaining active once coverage becomes adequate. The final excess-width component is then defined as
\begin{equation*}
    r_{\mathrm{excess},k}(a) = e_k^{(a)}\,\rho_k^{(a)},
    \label{eq:reward_excess}
\end{equation*}
where $\rho_k^{(a)}$ is a coverage-dependent weight that reduces the penalty when additional width is needed to recover coverage. In our implementation, this weight is bounded below by $0.25$, preventing excessive widening from becoming entirely unpenalized.

\section{Implementation Details of the Experiments}\label{app:impl_details}

This section provides additional implementation details for the experimental evaluation described in Section~\ref{sec:experiment}. We describe the construction of the synthetic and semi-synthetic evaluation tasks, including the OpenML-\ul{\textit{CC18}} benchmark, the training and implementation configuration of \TabCon, and the evaluation metrics used throughout the experiments.

\subsection{Synthetic DGPs}\label{app:dgps}

As described in Section~\ref{sec:experiment}, the synthetic evaluation contains both in-distribution and out-of-distribution DGPs, together with a separate stress-test benchmark. All evaluation datasets are generated independently from the datasets used for training, including those associated with the in-distribution DGPs.

\paragraph{Standard synthetic benchmark.} The standard synthetic benchmark evaluates \TabCon\ over a broad distribution of regression tasks. In the in-distribution setting, the DGP components are sampled from the same general family used during training, allowing the regression function, observation-noise mechanism, covariate structure, dimensionality, and context size to vary jointly across tasks. The resulting benchmark therefore reflects the natural heterogeneity induced by the synthetic prior. The number of context observations ranges from $8$ to $2{,}048$, while the number of covariates varies from $1$ to $160$; the number of query points is sampled from $\{32,64,128,256\}$.

The out-of-distribution DGPs are instead based on functional families that are not observed during training. In particular, we employ Random Fourier Feature (RFF) functions, multilayer perceptrons (MLPs), and low-rank Gaussian-process (GP) draws. These tasks are generated with context sizes in $\{32,64,128,256\}$, numbers of covariates in $\{4,8,16,32\}$, and $64$ query points per task. Observation noise is independently varied across Gaussian, Student-$t$, heteroskedastic, and contaminated distributions. In both the in- and out-of-distribution settings, the regression function is generated explicitly and thus remains known at every queried covariate value, allowing us to evaluate confidence-interval coverage directly against the ground truth.

\paragraph{Stress-test benchmark.} 
The stress-test benchmark serves a different purpose. Rather than sampling all DGP components jointly from the standard task distribution (as in standard synthetic benchmark), each setting introduces a specific component (e.g.,regression-function, noise, and feature configurations) while keeping the remaining components close to a common reference configuration. This controlled construction allows us to isolate the effect of individual sources of complexity that may otherwise be masked by averaging over heterogeneous tasks in the standard benchmark. We next explain the settings considered for each component.

For the regression function, we focus on (i) \textit{Bounded functions}, obtained by applying a hyperbolic-tangent transformation;
(ii) \textit{Quantized functions}, whose output is restricted to between $4$ and $64$ discrete levels; and (iii) \textit{Discontinuous functions}, which take between $2$ and $8$ constant values over different regions of the covariate space, inducing abrupt jumps between regions. For the noise mechanism, we test (i) \textit{Near-zero noise}, where the observation-noise scale is reduced to approximately $10^{-4}$--$10^{-2}$ times the reference signal scale; (ii) \textit{High noise}, where the noise scale is increased to approximately $0.5$--$2$ times the reference signal scale; (iii) \textit{Heavy-tailed noise}, generated from a Student-$t$ distribution with $3$--$8$ degrees of freedom; (iv) \textit{Heteroskedastic noise}, where the noise scale varies with the observed covariates; and (v) \textit{Asymmetric-outlier contamination}, where observations are contaminated on one side with probability between $0.01$ and $0.08$, with contaminated values shifted by approximately $10$--$50$ times the reference scale. 

Finally, we consider a \textit{categorical-plus-missing covariates} setting, which stresses the observed feature space rather than the regression function or noise mechanism. A subset of the covariates is converted into categorical variables with $2$--$12$ levels, while $10\%$--$30\%$ of the feature entries are set to missing according to either a missing-completely-at-random (MCAR) or missing-at-random (MAR) mechanism.

\subsection{Semi-synthetic CC18 Benchmark}\label{app:cc18}

To extend the fully synthetic experiments to realistic covariate distributions, we construct a semi-synthetic benchmark from $13$ datasets in the OpenML-\ul{\textit{CC18}} suite. Specifically, we consider \textit{MiceProtein}, \textit{analcatdata\_dmft}, \textit{bank-marketing}, \textit{blood-transfusion-service-center}, \textit{breast-w}, \textit{connect-4}, \textit{credit-approval},
\textit{first-order-theorem-proving}, \textit{jungle\_\allowbreak chess\_\allowbreak 2pcs\_\allowbreak raw\_\allowbreak endgame\_\allowbreak complete}, \textit{kc2}, \textit{mfeat-fourier}, \textit{pc4}, and \textit{phoneme}.

For each dataset, we preserve the original covariate distribution while discarding the observed outcome labels. This allows us to retain the real-world structure of the feature space while replacing the unknown data-generating mechanism with a designed model
\begin{equation}
    Y_i = f(X_i) + \varepsilon_i,
    \label{eq:cc18_response}
\end{equation}
where $f$ is a \textit{synthetic} regression function. 

Similarly to the synthetic benchmarks, we vary both the regression function and the observation-noise mechanism across tasks. In particular, the response-generating process reuses the same target and noise regimes considered by the synthetic generator, while replacing synthetic covariates with the real feature distributions of the \ul{\textit{CC18}} datasets. The base regression mapping is sampled from the four families (i.e., a smooth nonlinear transformation, an RFF-based mapping, a low-rank GP draw, and a tree-based mapping) and is further varied through regular, bounded, discontinuous, quantized, near-constant, and heavy-tailed target regimes. The observation noise is independently sampled from Gaussian, near-zero, high-noise, Student-$t$, skewed, asymmetric-contamination, and heteroskedastic regimes.

\subsection{Training Configuration}

The training configuration follows the pretraining and post-training pipeline introduced in Section~\ref{sec:method}. To construct the confidence intervals in \eqref{eqn:proposedCI}, we use the frozen \TabPFN-v3 backbone to obtain the conditional-mean estimate $\widehat f(x)$, while the pretraining and post-training stages are used to learn the residual distribution and estimate the quantiles required to form the lower and upper confidence bounds.

Each stage is trained on independently generated synthetic tasks sampled from the training DGP prior, with no reuse of datasets employed in the subsequent evaluation. In the following, we report the architecture and optimization settings of the residual heads and router, together with stage-specific hyperparameters used throughout the pipeline.

\paragraph{Global Head.}
We generate $50{,}000$ synthetic datasets from the full collection of training DGPs and train the Global Head to model the distribution of the estimation error. The head is implemented as a lightweight two-layer MLP applied to the final-layer query representation returned by the frozen \TabPFN-v3 backbone. Specifically, it maps the $512$-dimensional representation to $1{,}024$ hidden units through a GELU activation and then to $5{,}000$ logits, which parameterize the discretized residual distribution. Following \eqref{eqn:negative}, we minimize
\begin{equation*}
    \ell_{\mathrm{global}}
    = \ell_{\mathrm{NLL}} + 0.20\,\ell_{\mathrm{CRPS}}.
    \label{eq:global_loss_impl}
\end{equation*}
Only the parameters of the residual head are optimized. We use AdamW with learning rate $3\times10^{-4}$, weight decay $10^{-4}$, gradient accumulation over $8$ tasks, and gradient clipping at norm $1$.

\paragraph{Specialist Heads.}
After training the Global Head, we use its empirical coverage behaviour to assign each training DGP to one of three regimes: undercoverage, intermediate, and overcoverage. This assignment is based on the weighted signed coverage score introduced in Section~\ref{sec:method} and determines how the two Specialist Heads are updated on each task. We compute the weighted signed coverage score the Global Head at nominal coverage levels $50\%$, $80\%$, $90\%$, and $95\%$, assigning weights $0.25$, $0.50$, $1.00$, and $2.00$, respectively, so that higher coverage levels are weighted more heavily. 

Both the Expansion and Shrinkage Specialists retain the same residual-head architecture as the Global Head and are initialized from its trained parameters. For a task associated with specialist $e\in\{\mathrm{expansion},\mathrm{shrinkage}\}$, the corresponding head is optimized according to
\begin{equation*}
    \ell_{\mathrm{specialist}}=\ell_{\mathrm{NLL}} +0.20 \, \ell_{\mathrm{CRPS}}+ 0.005\, \textrm{KL}(p_{\textrm{global}},p_{\textrm{specialist}}).
    \label{eq:specialist_loss_impl}
\end{equation*}
The opposite specialist is updated only through the KL term, which limits deviations from the Global Head outside its target regime.

The specialist heads are optimized with AdamW using learning rate
$3\times10^{-5}$, weight decay $10^{-4}$, gradient accumulation over $4$ tasks, and gradient clipping at norm $1$. We use a linear warm-up over the first $5\%$ of optimization updates followed by cosine learning-rate decay.

\paragraph{Router configuration.}
The router takes exclusively the query embeddings returned by the frozen \TabPFN ensemble members as input. Following the architecture described in Appendix~\ref{app:router_details}, we use $M=2$ ensemble members, set both the query-level and ensemble-member latent dimensions to $h=96$, and employ $H=4$ learned attention queries. The attention queries are initialized from $\mathcal N(0,h^{-1})$, while the weights and bias of the final routing layer are initialized to zero.

\paragraph{Router initialization through SFT.}
We generate $50{,}000$ additional synthetic datasets from the full collection of training DGPs and use them to train the router through supervised fine-tuning (SFT) minimizing

\begin{eqnarray*}\label{eqn:SFTloss-conf}
\ell_{\textrm{SFT}} = \ell_{\textrm{NLL}} + \ell_{\textrm{CRPS}},
\end{eqnarray*}
where $\eta_3=1$ in \eqref{eqn:SFTloss}. The loss $\ell_{\textrm{SFT}}$ is evaluated on the final residual mixture distribution obtained by combining the three frozen residual heads according to the task-dependent router weights, focusing only on the optimization of the router parameters.

We use AdamW with learning rate $10^{-4}$, weight decay $10^{-4}$, gradient accumulation over $4$ tasks, and gradient clipping at norm $1$. The learning rate is linearly warmed up over the first $5\%$ of optimization updates and subsequently follows a cosine decay schedule. During training, the number of queried covariate values cycles over $Q\in\{64,128,256,256\}$, assigning greater frequency to the largest query
set.

\paragraph{RL-based post-training.}
Starting from the SFT initialization, we generate $100{,}000$ additional synthetic datasets and further optimize the router using RLOO. For each task, the router defines a policy over routing actions, with every sampled action projected through the same strict Top-$2$ rule used at inference before constructing the related residual mixture. The exploration standard deviation is linearly annealed from $1.0$ to $0.10$ over the course of post-training. Following RLOO, the reward of each action is compared against the average reward of the remaining candidates, which serves as a leave-one-out baseline.

The reward is computed at the task level by aggregating coverage and interval properties across the queried covariate values, as detailed in Appendix~\ref{app:pt_details}. We set the coefficients in \eqref{eqn:reward} to $\gamma_{\mathrm{WIL}}=1, \gamma_{\mathrm{WIS}}=0.5, \gamma_{\mathrm{CS}}=1, \gamma_{\mathrm{tail}}=1, \gamma_{\mathrm{CVaR}}=1, \gamma_{\mathrm{e}}=1$. For the WIL and WIS components, we consider the nominal coverage levels $\mathcal{C}= \{0.50,0.80,0.90,0.95\}$ with the corresponding weights $\boldsymbol{\nu} =(0.10,\,0.20,\,0.30,\,0.40)$, leading to place a stronger contribution for higher coverage levels.

The router is optimized with AdamW using a learning rate of $10^{-4}$, weight decay $10^{-4}$, gradient accumulation over $4$ tasks, and gradient clipping at norm $1$. We retain the same linear warm-up and cosine-decay schedule used during SFT. 

\subsection{Evaluation Metrics}\label{app:metrics}

In our experiments, we mainly evaluate the confidence intervals produced by \TabCon\ in terms of empirical coverage probability (CP) and interval length (IL). Together, these metrics summarize the calibration and sharpness of the resulting confidence intervals, with CP measuring agreement with the nominal coverage level and IL quantifying interval width.

We first consider the empirical coverage. For each evaluation task $k$ and query point $x_{kq}$, we define the interval $[L_{kq},U_{kq}]$ returned by a given method. Its empirical coverage is computed by assessing whether the evaluation target falls within the reported interval, as
\begin{equation*}
    \mathrm{CP} = \frac{1}{N_{\mathrm{eval}}} \sum_{k,q} \mathbbm{1} \left\{L_{kq} \leq T_{kq} \leq U_{kq} \right\},
    \label{eq:coverage_metric}
\end{equation*}
where $T_{kq}$ denotes the target associated with the interval returned by the corresponding method, namely the regression function value for confidence intervals and the realized response for prediction intervals.

We next define the empirical interval length
\begin{equation*}
    \mathrm{IL} = \frac{1}{N_{\mathrm{eval}}} \sum_{k,q} \left( U_{kq}-L_{kq} \right).
    \label{eq:interval_length_metric}
\end{equation*}
Since the interval width is meaningful only compared with the achieved coverage, we assess CP and IL jointly when comparing the resulting intervals across methods.

\section{Additional Empirical Results}\label{app:add_emp_results}

This section extends the empirical analysis in Section~\ref{sec:experiment} along three dimensions. First, we provide a more granular view of the results at the $95\%$ nominal coverage level by reporting CP and IL separately across the \ul{\textit{CC18}} datasets and individual stress-test DGPs. This allows us to examine the heterogeneity that is hidden by the aggregate results and to identify the settings in which calibration is more challenging. Second, we evaluate all methods at additional nominal coverage levels of $90\%$ and $99\%$. For both levels, we report aggregate results across the four evaluation settings (i.e., Standard (ID), Standard (OOD), Stress-test, CC18) together with \ul{\textit{CC18}} datasets and individual stress-test DGPs analysis, allowing us to assess whether the calibration--sharpness behavior observed at $95\%$ persists across different levels of uncertainty. Finally, we provide a more detailed analysis of computational efficiency. In addition to the end-to-end inference times reported in Section~\ref{sec:experiment}, we separate the common \TabPFN\ backbone computation from the method-specific overhead of the uncertainty construction, thereby isolating the additional cost introduced by the residual heads and task-level routing.

\subsection{Additional Results at the 95\% Nominal Level}\label{app:per_dgp} 

\paragraph{Semi-Synthetic CC18 Benchmark.} We report coverage probability and interval length at 95\% nominal level separately for each dataset \ul{\textit{CC18}} across the baseline algorithms (see Section \ref{sec:experiment}) and \TabCon\ in Table~\ref{tab:cc18-per-dataset-95}. 

\begin{table}[!htbp]
  \centering
  \caption{Per-Dataset coverage probability (CP) and interval length (IL) on the CC18 real-covariate semi-synthetic benchmark at the 95\% nominal level. From left to right, results are reported for Linear, RFF, PI, Bootstrap, AMP and \TabCon. For CP, the value closest to the nominal level of 0.95 is shown in \textbf{bold}, and the second closest is \ul{underlined}. For IL, among methods satisfying $\mathrm{CP}\geq 0.95$, the shortest IL is shown in \textbf{bold}, and the second shortest is \ul{underlined}.}
  \label{tab:cc18-per-dataset-95}
  \setlength{\tabcolsep}{2.1pt}
  \renewcommand{\arraystretch}{1.10}
  \fontfamily{ptm}\fontsize{8pt}{9.6pt}\selectfont
  \begin{tabular*}{\textwidth}{@{\extracolsep{\fill}}lcccccc@{}}
    \toprule
    Dataset & Linear & RFF & PI & Bootstrap & AMP & \TabCon \\
    & \multicolumn{1}{c}{CP / IL} & \multicolumn{1}{c}{CP / IL} & \multicolumn{1}{c}{CP / IL} & \multicolumn{1}{c}{CP / IL} & \multicolumn{1}{c}{CP / IL} & \multicolumn{1}{c}{CP / IL} \\
    \midrule
    MiceProtein & 0.6978 / 1.0202 & 0.6438 / 0.9617 & 0.9864 / \textbf{2.6205} & 0.6025 / 0.3484 & \underline{0.9781} / \underline{2.6365} & \textbf{0.9269} / 1.0352 \\
    analcatdata-dmft & 0.8121 / 0.5280 & 0.8822 / 1.0762 & 0.9992 / 2.3870 & 0.8497 / 0.6667 & \underline{0.9891} / \underline{2.0333} & \textbf{0.9857} / \textbf{0.8547} \\
    bank-marketing & 0.5060 / 0.1667 & 0.4452 / 0.2022 & 0.9974 / 4.0046 & 0.7074 / 0.4495 & \textbf{0.9799} / \underline{3.2497} & \underline{0.9942} / \textbf{2.1687} \\
    blood-transfusion & 0.3367 / 0.1761 & 0.6833 / 0.4412 & 0.9964 / 1.1264 & 0.8120 / 0.2576 & \textbf{0.9746} / \underline{1.0751} & \underline{0.9777} / \textbf{0.2611} \\
    breast-w & 0.4179 / 0.5153 & 0.7670 / 1.5191 & 0.9943 / 3.0226 & 0.7420 / 0.6842 & \textbf{0.9628} / \underline{2.5238} & \underline{0.9728} / \textbf{1.6167} \\
    connect-4 & 0.5950 / 0.0538 & 0.3602 / 0.0564 & 0.9959 / 1.3124 & 0.7768 / 0.1558 & \textbf{0.9910} / \underline{1.3120} & \underline{0.9923} / \textbf{0.8986} \\
    credit-approval & 0.7570 / 0.6141 & 0.8058 / 0.9372 & 0.9923 / \underline{2.1683} & 0.7072 / 0.2119 & \underline{0.9758} / \textbf{2.0549} & \textbf{0.9420} / 0.5922 \\
    first-order-theorem & 0.5015 / 0.2439 & 0.4858 / 0.2398 & 0.9938 / \underline{1.5247} & 0.6868 / 0.3097 & \underline{0.9832} / 1.5520 & \textbf{0.9708} / \textbf{0.9619} \\
    jungle-chess & 0.1863 / 0.0554 & 0.3986 / 0.1388 & 0.9998 / 3.7744 & 0.8460 / 0.4883 & \textbf{0.9902} / \underline{2.9580} & \underline{0.9921} / \textbf{1.2117} \\
    kc2 & 0.7587 / 1008.9809 & 0.8923 / 1.9816 & 0.9942 / 2.5643 & 0.7698 / 0.4573 & \underline{0.9904} / \underline{2.5366} & \textbf{0.9566} / \textbf{0.8106} \\
    mfeat-fourier & 0.6955 / 0.9144 & 0.6462 / 0.9106 & 0.9963 / \underline{4.2712} & 0.6217 / 0.3190 & \underline{0.9863} / \textbf{3.9521} & \textbf{0.9225} / 1.0230 \\
    pc4 & 0.5208 / 1.0582 & 0.6317 / 1.0370 & 0.9940 / \textbf{3.4777} & 0.6394 / 0.5831 & \underline{0.9870} / \underline{3.5839} & \textbf{0.9402} / 2.0009 \\
    phoneme & 0.2048 / 0.1177 & 0.4677 / 0.2656 & 0.9993 / 2.2362 & 0.7654 / 0.3776 & \textbf{0.9747} / \underline{1.8834} & \underline{0.9771} / \textbf{0.4564} \\
    \bottomrule
  \end{tabular*}
\end{table}

Overall, \TabCon\ attains a near-coverage at 95\% in 7 out of the 13 CC18 datasets and the shortest interval length in 9 out of 13 when the coverage level reaches at least 95\%. Looking across individual datasets, \TabCon\ mildly undercovers in four cases, namely \textit{MiceProtein}, \textit{credit-approval}, \textit{mfeat-fourier}, and \textit{pc4}, with coverage remaining between approximately 0.92 and 0.94. Interestingly, these datasets tend to exhibit complex covariate structures. \textit{MiceProtein} and \textit{mfeat-fourier} contain high-dimensional continuous representations, \textit{credit-approval} combines numerical and categorical covariates together with missing observations, and \textit{pc4} contains a rich collection of software-complexity metrics. This pattern suggests that the remaining undercoverage may be associated with the interaction between dimensionality and the geometry of the covariate distribution, which can lead \TabCon\ to produce intervals that are slightly too narrow. However, dimensionality alone does not explain this behavior. For example, \textit{first-order-theorem} and \textit{connect-4} are also high-dimensional, yet \TabCon\ mildly overcovers on both datasets.

On the other side, AMP attains a near-coverage at 95\% in 6 out of the 13 CC18 datasets. However, AMP systematically tends to overcover and produces wider intervals than \TabCon. Overall, these results highlight the sharper intervals achieved by \TabCon\ while maintaining coverage close to the nominal level across heterogeneous real-covariate distributions. 

PI shows a more conservative behavior, producing coverage close to one across the CC18 datasets, but at the cost of longer intervals. This is consistent with PI being constructed from TabPFN's predictive distribution, which captures not only uncertainty in the estimated regression function but also outcome variability. In contrast, Bootstrap, Linear, and RFF undercover across all datasets, indicating that their shorter intervals do not provide adequate coverage. Additionally, Linear expresses an extremely large interval length on \textit{kc2}, suggesting instability in this covariate distribution.

\paragraph{Synthetic Benchmark.} We report coverage probability and interval length at 95\% nominal level separately for each stress-test DGP across the baseline algorithms and \TabCon\ in Table~\ref{tab:stress-per-regime-95}. 

\begin{table}[!htbp]
  \centering
  \caption{Per-DGP stress-test coverage probability (CP) and interval length (IL) at the 95\% nominal level. From left to right, results are reported for Linear, RFF, PI, Bootstrap, AMP and \TabCon. For CP, the value closest to the nominal level of 0.95 is shown in \textbf{bold}, and the second closest is \ul{underlined}. For IL, among methods satisfying $\mathrm{CP}\geq 0.95$, the shortest IL is shown in \textbf{bold}, and the second shortest is \ul{underlined}.}
  \label{tab:stress-per-regime-95}
  \setlength{\tabcolsep}{2.5pt}
  \renewcommand{\arraystretch}{1.12}
  \fontfamily{ptm}\fontsize{8pt}{9.6pt}\selectfont
  \begin{tabular*}{\textwidth}{@{\extracolsep{\fill}}lcccccc@{}}
    \toprule
    DGP & Linear & RFF & PI & Bootstrap & AMP & \TabCon \\
    & \multicolumn{1}{c}{CP / IL} & \multicolumn{1}{c}{CP / IL} & \multicolumn{1}{c}{CP / IL} & \multicolumn{1}{c}{CP / IL} & \multicolumn{1}{c}{CP / IL} & \multicolumn{1}{c}{CP / IL} \\
    \midrule
    Regular control & 0.7584 / 3.6781 & 0.7127 / 2.0034 & 0.9898 / \textbf{1.8534} & 0.7115 / 0.4684 & \underline{0.9772} / \underline{1.8968} & \textbf{0.9482} / 1.2188 \\
    Near-constant & 0.7470 / 0.0359 & 0.7252 / 0.0207 & 0.9937 / 0.0199 & 0.7167 / 0.0043 & \underline{0.9821} / \underline{0.0196} & \textbf{0.9556} / \textbf{0.0110} \\
    Bounded & 0.7436 / 3.2461 & 0.7182 / 1.5454 & 0.9937 / \underline{1.7427} & 0.7159 / 0.4241 & \underline{0.9822} / 1.7923 & \textbf{0.9555} / \textbf{1.1160} \\
    Quantized & 0.7266 / 3.4248 & 0.7111 / 2.0273 & 0.9929 / \textbf{1.8310} & 0.6772 / 0.4548 & \underline{0.9798} / \underline{1.8953} & \textbf{0.9400} / 1.2283 \\
    Discontinuous & 0.7430 / 6.0256 & 0.7375 / 2.3454 & 0.9939 / \underline{2.5439} & 0.7048 / 0.6711 & \textbf{0.9808} / 2.7435 & \underline{0.9842} / \textbf{2.1095} \\
    Near-zero & 0.6522 / 2.4134 & 0.6009 / 1.4672 & 0.9804 / \textbf{0.6205} & 0.7069 / 0.2473 & \underline{0.9541} / \underline{0.6456} & \textbf{0.9516} / 0.6552 \\
    High noise & 0.8822 / 13.6168 & 0.8874 / 5.8670 & 0.9991 / \underline{9.6276} & 0.7656 / 1.6825 & \underline{0.9957} / 9.7841 & \textbf{0.9709} / \textbf{4.7008} \\
    Heavy-tail noise & 0.7820 / 5.1986 & 0.7528 / 2.5593 & 0.9928 / \textbf{3.0465} & 0.7191 / 0.6445 & \underline{0.9822} / \underline{3.1927} & \textbf{0.9460} / 1.7041 \\
    Asymmetric outlier & 0.8329 / 13.5983 & 0.8085 / 5.6424 & 0.9790 / \textbf{6.8455} & 0.6712 / 1.1702 & \underline{0.9318} / 6.6426 & \textbf{0.9400} / 3.1727 \\
    Heteroscedastic & 0.7918 / 5.5041 & 0.7524 / 2.8392 & 0.9946 / \underline{3.3166} & 0.7207 / 0.7290 & \underline{0.9836} / 3.3943 & \textbf{0.9704} / \textbf{1.8532} \\
    Categorical + missing & 0.7714 / 16.2034 & 0.6868 / 2.0897 & 0.9818 / \textbf{2.4965} & 0.5716 / 0.6235 & \underline{0.9569} / \underline{2.5532} & \textbf{0.9473} / 2.0306 \\
    \bottomrule
  \end{tabular*}
\end{table}

Overall, \TabCon\ remains well calibrated across a broad range of challenging data-generating mechanisms. Its coverage is closest to the nominal level of 95\% in 10 out of the 11 DGPs, with coverage never falling below 94\%. Moreover, among the six DGPs for which \TabCon\ satisfies $\mathrm{CP}\geq0.95$, it produces the shortest interval among valid methods in five cases. Looking across individual stress-test regimes, \TabCon\ achieves nearly nominal coverage under the near-constant, bounded, and near-zero settings, suggesting that its calibration remains stable when the regression function becomes weakly varying or constrained. It also maintains good coverage under high noise and heteroscedasticity, while producing shorter intervals than PI and AMP. The most challenging settings for \TabCon\ are the quantized, heavy-tail noise, asymmetric outlier, and categorical-with-missing DGPs, where it mildly undercovers, with coverage ranging from 0.94 to 0.9473. Nevertheless, \TabCon\ remains the method closest to the nominal coverage level in each of these cases.

Similarly to the results in the Semi-Synthetic setting, AMP provides the closest comparison to \TabCon\ in terms of calibration, but tends to overcover across most stress-test regimes. It is closest to the nominal level only in the discontinuous setting and generally produces wider intervals than \TabCon. Also PI overcovers across all stress-test DGPs, with coverage ranging from approximately 0.98 to almost one, with longer intervals than those produced by \TabCon.

Finally, Linear, RFF, and Bootstrap undercover across every stress-test regime. Bootstrap produces the shortest intervals in most regimes, yet these intervals are too narrow.

\subsection{Results at 90\% nominal coverage.}\label{app:other_alpha_90} We report coverage probability and interval length at 90\% nominal level across the baseline algorithms and \TabCon\ in Table~\ref{tab:tabcon-main-results-90}. 

\begin{table}[!htbp]
  \centering
  \caption{Coverage probability (CP) and interval length (IL) at the 90\% nominal coverage level. From left to right, results are reported for the standard in-distribution (ID), standard out-of-distribution (OOD), stress-test, and CC18 settings. For CP, the value closest to the nominal level of 0.90 is shown in \textbf{bold}, and the second closest is \ul{underlined}. For IL, among methods satisfying $\mathrm{CP}\geq 0.90$, the shortest interval is shown in \textbf{bold}, and the second shortest is \ul{underlined}.}
  \label{tab:tabcon-main-results-90}

  \setlength{\tabcolsep}{5.5pt}
  \renewcommand{\arraystretch}{1.10}
  \small

  \begin{tabular}{@{}lcccccccc@{}}
    \toprule
    Method
    & \multicolumn{2}{c}{Standard (ID)}
    & \multicolumn{2}{c}{Standard (OOD)}
    & \multicolumn{2}{c}{Stress-test}
    & \multicolumn{2}{c}{CC18} \\
    \cmidrule(lr){2-3}
    \cmidrule(lr){4-5}
    \cmidrule(lr){6-7}
    \cmidrule(l){8-9}
    & CP & IL & CP & IL & CP & IL & CP & IL \\
    \midrule

    Linear
    & 0.8293 & 7.1921
    & 0.6215 & 2.1325
    & 0.7247 & 5.8131
    & 0.4926 & 65.4884 \\

    RFF
    & 0.7921 & 3.4030
    & 0.7322 & 2.1013
    & 0.6763 & 2.2159
    & 0.5720 & 0.6306 \\

    PI
    & 0.9508 & 3.2849
    & 0.9427 & 1.7829
    & 0.9747 & 2.4356
    & 0.9874 & 2.0622 \\

    Bootstrap
    & 0.5280 & 0.7770
    & 0.6240 & 0.6906
    & 0.6371 & 0.5657
    & 0.6807 & 0.3532 \\

    AMP
    & \textbf{0.9015} & \underline{2.9459}
    & \textbf{0.9013} & \underline{1.5370}
    & \underline{0.9556} & \underline{2.2527}
    & \underline{0.9685} & \underline{1.5867} \\

    TabCon
    & \underline{0.9062} & \textbf{2.0604}
    & \underline{0.9037} & \textbf{1.5017}
    & \textbf{0.9116} & \textbf{1.4154}
    & \textbf{0.9329} & \textbf{0.7704} \\

    \bottomrule
  \end{tabular}
\end{table}
Overall, the results are consistent with those observed at 95\% coverage. \TabCon\ maintains coverage close to the nominal level across all four evaluation settings while producing the shortest intervals among methods satisfying $\mathrm{CP}\geq 0.90$ in every case.

In the two standard settings, AMP achieves coverage slightly closer to the nominal level, with CP equal to 0.9015 and 0.9013 under ID and OOD evaluation, respectively, compared with 0.9062 and 0.9037 for \TabCon. In the more challenging stress-test and CC18 settings, \TabCon\ achieves coverage closer to the 90\% target than AMP, reaching 0.9116 and 0.9329, respectively, while AMP overcovers with CP equal to 0.9556 and 0.9685. This difference is accompanied by shorter intervals for \TabCon, especially in the stress-test and CC18 settings.

PI again demonstrates overcoverage across all settings, with coverage ranging from approximately 0.94 to 0.99. This conservative behavior is accompanied by wider intervals than those produced by \TabCon, with the predictive distribution capturing both estimation uncertainty and outcome variability. Linear, RFF, and Bootstrap, by contrast, undercover in every setting. Bootstrap produces the shortest intervals overall, but its coverage remains far below the nominal level. Linear and RFF similarly fail to attain the desired coverage, with the largest discrepancies occurring under OOD and CC18 evaluation.

Overall, lowering the nominal coverage level from 95\% to 90\% preserves the main empirical pattern, as \TabCon\ remains well calibrated while providing sharper intervals, especially in the more challenging stress-test and CC18 settings.

\paragraph{Semi-Synthetic CC18 Benchmark.} We report coverage probability and interval length at 90\% nominal level separately for each dataset \ul{\textit{CC18}} across the baseline algorithms and \TabCon\ in Table~\ref{tab:cc18-per-dataset-90}.

\begin{table}[!htbp]
  \centering
  \caption{Per-Dataset coverage probability (CP) and interval length (IL) on the CC18 real-covariate semi-synthetic benchmark at the 90\% nominal level. From left to right, results are reported for Linear, RFF, PI, Bootstrap, AMP and \TabCon. For CP, the value closest to the nominal level of 0.90 is shown in \textbf{bold}, and the second closest is \ul{underlined}. For IL, among methods satisfying $\mathrm{CP}\geq 0.90$, the shortest IL is shown in \textbf{bold}, and the second shortest is \ul{underlined}.}
  \label{tab:cc18-per-dataset-90}
  \setlength{\tabcolsep}{2.1pt}
  \renewcommand{\arraystretch}{1.10}
  \fontfamily{ptm}\fontsize{8pt}{9.6pt}\selectfont
  \begin{tabular*}{\textwidth}{@{\extracolsep{\fill}}lcccccc@{}}
    \toprule
    Dataset & Linear & RFF & PI & Bootstrap & AMP & \TabCon \\
    & \multicolumn{1}{c}{CP / IL} & \multicolumn{1}{c}{CP / IL} & \multicolumn{1}{c}{CP / IL} & \multicolumn{1}{c}{CP / IL} & \multicolumn{1}{c}{CP / IL} & \multicolumn{1}{c}{CP / IL} \\
    \midrule
    MiceProtein & 0.6469 / 0.8562 & 0.5719 / 0.8071 & 0.9735 / \underline{2.1530} & 0.5549 / 0.2942 & \underline{0.9580} / \textbf{1.9963} & \textbf{0.8630} / 0.7511 \\
    analcatdata-dmft & 0.7561 / 0.4431 & \underline{0.8236} / 0.9032 & 0.9987 / 1.9072 & 0.7903 / 0.5841 & 0.9802 / \underline{1.0162} & \textbf{0.9673} / \textbf{0.5092} \\
    bank-marketing & 0.4716 / 0.1399 & 0.4098 / 0.1697 & 0.9924 / 2.7990 & 0.6609 / 0.3886 & \textbf{0.9657} / \underline{2.1364} & \underline{0.9822} / \textbf{1.5906} \\
    blood-transfusion & 0.3011 / 0.1478 & 0.6152 / 0.3703 & 0.9800 / 0.7509 & 0.7648 / 0.2263 & \underline{0.9549} / \underline{0.5553} & \textbf{0.9416} / \textbf{0.1882} \\
    breast-w & 0.3659 / 0.4324 & 0.7165 / 1.2749 & 0.9871 / 2.4586 & 0.6976 / 0.5889 & \underline{0.9437} / \underline{1.7064} & \textbf{0.9433} / \textbf{1.2013} \\
    connect-4 & 0.5504 / 0.0452 & 0.3270 / 0.0474 & 0.9916 / 1.0893 & 0.7276 / 0.1334 & \underline{0.9857} / \underline{0.9911} & \textbf{0.9827} / \textbf{0.6769} \\
    credit-approval & 0.6966 / 0.5154 & 0.7449 / 0.7865 & 0.9836 / 1.7989 & 0.6556 / 0.1792 & \underline{0.9589} / \underline{1.4482} & \textbf{0.9024} / \textbf{0.4114} \\
    first-order-theorem & 0.4530 / 0.2046 & 0.4387 / 0.2013 & 0.9816 / 1.2511 & 0.6320 / 0.2722 & \underline{0.9719} / \underline{1.1667} & \textbf{0.9321} / \textbf{0.7024} \\
    jungle-chess & 0.1668 / 0.0465 & 0.3587 / 0.1165 & 0.9861 / 2.9695 & 0.7980 / 0.4106 & \underline{0.9814} / \underline{1.5380} & \textbf{0.9771} / \textbf{0.7803} \\
    kc2 & 0.6949 / 846.7635 & \underline{0.8407} / 1.6630 & 0.9898 / 1.8475 & 0.7084 / 0.3956 & 0.9815 / \underline{1.6754} & \textbf{0.9208} / \textbf{0.5889} \\
    mfeat-fourier & 0.6400 / 0.7674 & 0.5982 / 0.7642 & 0.9910 / \underline{3.3122} & 0.5755 / 0.2742 & \underline{0.9723} / \textbf{2.8306} & \textbf{0.8892} / 0.7405 \\
    pc4 & 0.4844 / 0.8880 & 0.5736 / 0.8703 & 0.9849 / \underline{2.8591} & 0.5805 / 0.5080 & \underline{0.9769} / \textbf{2.6276} & \textbf{0.8938} / 1.5460 \\
    phoneme & 0.1763 / 0.0988 & 0.4175 / 0.2229 & 0.9956 / 1.6120 & 0.7026 / 0.3361 & \underline{0.9587} / \underline{0.9394} & \textbf{0.9320} / \textbf{0.3280} \\
    \bottomrule
  \end{tabular*}
\end{table}

Comparing these results with those obtained at the 95\% nominal level reveals a largely consistent pattern across coverage levels. At 95\%, \TabCon\ achieves the coverage closest to the target in 7 out of the 13 datasets, whereas at 90\% this increases to 12 out of 13. The three datasets that remain undercovered at 90\% (i.e., \textit{MiceProtein}, \textit{mfeat-fourier}, and \textit{pc4}) are also among the four undercovered datasets at 95\%, suggesting that the remaining calibration errors are concentrated in a small and consistent subset of covariate distributions. In contrast, \textit{credit-approval} moves from mild undercoverage at 95\% to nearly exact nominal coverage at 90\%. Moreover, \TabCon\ produces the shortest interval among methods satisfying the nominal coverage requirement in most of the cases.

AMP again provides the closest comparison to \TabCon\ in terms of calibration. It satisfies the nominal coverage requirement on all 13 datasets, but overcovers, with coverage ranging from approximately 0.94 to 0.99. Moreover, whenever \TabCon\ attains the nominal coverage level, it produces shorter intervals than AMP, with large reductions on datasets such as \textit{credit-approval}, \textit{kc2}, and \textit{phoneme}.

PI shows an even more conservative behavior, producing coverage between approximately 0.97 and one across all datasets and wider intervals. In contrast, Linear, RFF, and Bootstrap undercover on every CC18 dataset. Bootstrap again produces very short intervals, but these are accompanied by coverage far below the nominal level. Linear are still characterized by an extremely large interval on \textit{kc2}.

\paragraph{Synthetic Benchmark.} We report coverage probability and interval length at 90\% nominal level separately for each stress-test DGP across the baseline algorithms and \TabCon\ in Table~\ref{tab:stress-per-regime-90}. 

\begin{table}[!htbp]
  \centering
  \caption{Per-DGP stress-test coverage probability (CP) and interval length (IL) at the 90\% nominal level. From left to right, results are reported for Linear, RFF, PI, Bootstrap, AMP and \TabCon. For CP, the value closest to the nominal level of 0.90 is shown in \textbf{bold}, and the second closest is \ul{underlined}. For IL, among methods satisfying $\mathrm{CP}\geq 0.90$, the shortest IL is shown in \textbf{bold}, and the second shortest is \ul{underlined}.}
  \label{tab:stress-per-regime-90}
  \setlength{\tabcolsep}{2.5pt}
  \renewcommand{\arraystretch}{1.12}
  \fontfamily{ptm}\fontsize{8pt}{9.6pt}\selectfont
  \begin{tabular*}{\textwidth}{@{\extracolsep{\fill}}lcccccc@{}}
    \toprule
    DGP & Linear & RFF & PI & Bootstrap & AMP & \TabCon \\
    & \multicolumn{1}{c}{CP / IL} & \multicolumn{1}{c}{CP / IL} & \multicolumn{1}{c}{CP / IL} & \multicolumn{1}{c}{CP / IL} & \multicolumn{1}{c}{CP / IL} & \multicolumn{1}{c}{CP / IL} \\
    \midrule
    Regular control & 0.7152 / 3.0867 & 0.6526 / 1.6813 & 0.9791 / 1.5268 & 0.6526 / 0.3954 & \underline{0.9643} / \underline{1.3716} & \textbf{0.9013} / \textbf{0.9402} \\
    Near-constant & 0.7043 / 0.0301 & 0.6598 / 0.0174 & 0.9842 / 0.0165 & 0.6567 / 0.0036 & \underline{0.9707} / \underline{0.0146} & \textbf{0.9131} / \textbf{0.0083} \\
    Bounded & 0.7009 / 2.7242 & 0.6534 / 1.2970 & 0.9840 / 1.4486 & 0.6561 / 0.3574 & \underline{0.9703} / \underline{1.3062} & \textbf{0.9131} / \textbf{0.8636} \\
    Quantized & 0.6851 / 2.8742 & 0.6468 / 1.7014 & 0.9839 / \underline{1.5165} & 0.6163 / 0.3861 & \underline{0.9666} / \textbf{1.3686} & \textbf{0.8886} / 0.9486 \\
    Discontinuous & 0.7023 / 5.0568 & 0.6840 / 1.9683 & 0.9879 / 2.0999 & 0.6526 / 0.5763 & \underline{0.9711} / \underline{1.9304} & \textbf{0.9679} / \textbf{1.6318} \\
    Near-zero & 0.6027 / 2.0254 & 0.5299 / 1.2313 & 0.9602 / \underline{0.4961} & 0.6504 / 0.2080 & \underline{0.9317} / \textbf{0.4375} & \textbf{0.9007} / 0.4982 \\
    High noise & \underline{0.8421} / 11.4276 & 0.8317 / 4.9238 & 0.9988 / 8.0743 & 0.7027 / 1.4357 & 0.9928 / \underline{7.2671} & \textbf{0.9338} / \textbf{3.6183} \\
    Heavy-tail noise & 0.7410 / 4.3628 & 0.6935 / 2.1478 & 0.9825 / \underline{2.4623} & 0.6530 / 0.5467 & \underline{0.9709} / \textbf{2.2473} & \textbf{0.8923} / 1.3091 \\
    Asymmetric outlier & 0.7930 / 11.4121 & 0.7504 / 4.7353 & \textbf{0.9114} / \textbf{3.4436} & 0.6061 / 0.9941 & 0.8702 / 3.5821 & \underline{0.8764} / 2.2658 \\
    Heteroscedastic & 0.7476 / 4.6192 & 0.6908 / 2.3827 & 0.9883 / 2.7476 & 0.6606 / 0.6171 & \underline{0.9748} / \underline{2.4879} & \textbf{0.9319} / \textbf{1.4162} \\
    Categorical + missing & 0.7282 / 13.5983 & 0.6226 / 1.7537 & 0.9654 / \underline{2.0503} & 0.5164 / 0.5324 & \underline{0.9364} / \textbf{1.8853} & \textbf{0.8983} / 1.5947 \\
    \bottomrule
  \end{tabular*}
\end{table}

The same challenging regimes identified at the 95\% nominal level remain visible at 90\%. Specifically, \TabCon\ mildly undercovers under quantization, heavy-tail noise, and categorical covariates with missing observations. The asymmetric-outlier setting is more challenging, with coverage decreasing to 0.8764; this is also the only DGP for which \TabCon\ is not the method closest to the nominal target. These results suggest that calibration remains more difficult under distributional irregularities that directly affect the shape or tails of the estimation error distribution. Overall, \TabCon\ remains well calibrated across a broad range of challenging data-generating mechanisms.

The behavior of the baseline methods is similar with that observed at the 95\% nominal level. AMP and PI remain conservative despite the lower coverage target, with PI overcovering across all stress-test DGPs and AMP doing so in all but the asymmetric-outlier setting. In contrast, Linear, RFF, and Bootstrap continue to undercover throughout the stress-test benchmark.

\subsection{Results at 99\% nominal coverage.}\label{app:other_alpha_99}
We report coverage probability and interval length at 99\% nominal level across the baseline algorithms and \TabCon\ in Table~\ref{tab:tabcon-main-results-99}.

\begin{table}[!htbp]
  \centering
  \caption{Coverage probability (CP) and interval length (IL) at the 99\% nominal coverage level. From left to right, 
    results are reported for the standard in-distribution (ID), standard out-of-distribution (OOD), stress-test, and CC18 settings. For CP, the value closest to the nominal level of 0.99 is shown in \textbf{bold}, and the second closest is \ul{underlined}. For IL, among methods satisfying $\mathrm{CP}\geq 0.99$, the shortest interval is shown in \textbf{bold}, and the second shortest is \ul{underlined}.}
  \label{tab:tabcon-main-results-99}

  \setlength{\tabcolsep}{5.5pt}
  \renewcommand{\arraystretch}{1.10}
  \small

  \begin{tabular}{@{}lcccccccc@{}}
    \toprule
    Method
    & \multicolumn{2}{c}{Standard (ID)}
    & \multicolumn{2}{c}{Standard (OOD)}
    & \multicolumn{2}{c}{Stress-test}
    & \multicolumn{2}{c}{CC18} \\
    \cmidrule(lr){2-3}
    \cmidrule(lr){4-5}
    \cmidrule(lr){6-7}
    \cmidrule(l){8-9}
    & CP & IL & CP & IL & CP & IL & CP & IL \\
    \midrule

    Linear
    & 0.9086 & 11.2627
    & 0.7681 & 3.3396
    & 0.8189 & 9.1032
    & 0.5977 & 102.5543 \\

    RFF
    & 0.9228 & 5.3290
    & 0.8965 & 3.2906
    & 0.8219 & 3.4700
    & 0.6911 & 0.9874 \\

    PI
    & \underline{0.9966} & \underline{5.6779}
    & \underline{0.9925} & \underline{3.1370}
    & 0.9977 & \underline{4.9424}
    & 0.9993 & \underline{3.8716} \\

    Bootstrap
    & 0.6705 & 1.1378
    & 0.7499 & 0.9902
    & 0.7598 & 0.8210
    & 0.7890 & 0.4897 \\

    AMP
    & 0.9706 & 13.1192
    & 0.9645 & 6.5085
    & \underline{0.9866} & 10.3262
    & \textbf{0.9918} & 7.5406 \\

    TabCon
    & \textbf{0.9923} & \textbf{4.3173}
    & \textbf{0.9914} & \textbf{3.0418}
    & \textbf{0.9918} & \textbf{3.0789}
    & \underline{0.9919} & \textbf{1.8936} \\

    \bottomrule
  \end{tabular}
\end{table}

At 99\% nominal coverage level, \TabCon\ continues to closely track the prescribed target across all evaluation settings. It achieves coverage closest to 0.99 in the standard ID, standard OOD, and stress-test settings, while ranking second on CC18 by a negligible margin. Importantly, \TabCon\ satisfies $\mathrm{CP}\geq0.99$ in all four settings and produces the shortest interval among methods meeting the nominal coverage requirement.

The behavior of AMP changes noticeably at this higher nominal level. While it remains well calibrated on CC18, it undercovers in the ID, OOD, and stress-test settings, in contrast with the more conservative behavior observed at the 90\% and 95\% levels. PI, instead, continues to overcover across all four settings, preserving the same conservative pattern observed at lower nominal coverage levels.

Linear, RFF, and Bootstrap again fail to attain the desired coverage in every setting. Their undercoverage becomes even more apparent at the 99\% target, while Bootstrap continues to achieve short intervals at the cost of substantial miscalibration.

\paragraph{Semi-Synthetic CC18 Benchmark.} We report coverage probability and interval length at 99\% nominal level separately for each dataset \ul{\textit{CC18}} across the baseline algorithms and \TabCon\ in Table~\ref{tab:cc18-per-dataset-99}.

\begin{table}[!htbp]
  \centering
  \caption{Per-Dataset coverage probability (CP) and interval length (IL) on the CC18 real-covariate semi-synthetic benchmark at the 99\% nominal level. From left to right, results are reported for Linear, RFF, PI, Bootstrap, AMP and \TabCon. For CP, the value closest to the nominal level of 0.99 is shown in \textbf{bold}, and the second closest is \ul{underlined}. For IL, among methods satisfying $\mathrm{CP}\geq 0.99$, the shortest IL is shown in \textbf{bold}, and the second shortest is \ul{underlined}.}
  \label{tab:cc18-per-dataset-99}
  \setlength{\tabcolsep}{2.1pt}
  \renewcommand{\arraystretch}{1.10}
  \fontfamily{ptm}\fontsize{8pt}{9.6pt}\selectfont
  \begin{tabular*}{\textwidth}{@{\extracolsep{\fill}}lcccccc@{}}
    \toprule
    Dataset & Linear & RFF & PI & Bootstrap & AMP & \TabCon \\
    & \multicolumn{1}{c}{CP / IL} & \multicolumn{1}{c}{CP / IL} & \multicolumn{1}{c}{CP / IL} & \multicolumn{1}{c}{CP / IL} & \multicolumn{1}{c}{CP / IL} & \multicolumn{1}{c}{CP / IL} \\
    \midrule
    MiceProtein & 0.7840 / 1.3407 & 0.7432 / 1.2639 & \underline{0.9978} / \textbf{3.4587} & 0.6642 / 0.4275 & \textbf{0.9883} / 8.4613 & 0.9818 / 1.9110 \\
    analcatdata-dmft & 0.8644 / 0.6939 & 0.9396 / 1.4144 & 1.0000 / \underline{3.6858} & 0.9003 / 0.7824 & \textbf{0.9937} / 5.7086 & \underline{0.9975} / \textbf{1.8055} \\
    bank-marketing & 0.5471 / 0.2191 & 0.4903 / 0.2657 & 0.9998 / \underline{5.6640} & 0.7550 / 0.5360 & \textbf{0.9913} / 10.7802 & \underline{0.9995} / \textbf{3.6229} \\
    blood-transfusion & 0.4075 / 0.2314 & 0.7808 / 0.5799 & 0.9987 / \underline{2.5471} & 0.8650 / 0.3091 & \textbf{0.9888} / 2.9099 & \underline{0.9955} / \textbf{0.4864} \\
    breast-w & 0.4976 / 0.6772 & 0.8380 / 1.9965 & 0.9995 / \underline{4.2610} & 0.8097 / 0.8292 & \underline{0.9814} / 8.6391 & \textbf{0.9957} / \textbf{2.7535} \\
    connect-4 & 0.6460 / 0.0707 & 0.4072 / 0.0742 & \underline{0.9992} / \underline{1.7785} & 0.8260 / 0.1883 & \textbf{0.9953} / 4.5193 & 0.9994 / \textbf{1.4232} \\
    credit-approval & 0.8242 / 0.8071 & 0.8845 / 1.2317 & \underline{0.9986} / \textbf{2.9537} & 0.7739 / 0.2627 & \textbf{0.9899} / 6.6121 & 0.9729 / 1.1641 \\
    first-order-theorem & 0.5629 / 0.3205 & 0.5499 / 0.3152 & 0.9989 / \underline{2.1894} & 0.7433 / 0.3598 & \textbf{0.9934} / 5.7711 & \underline{0.9971} / \textbf{1.6633} \\
    jungle-chess & 0.2189 / 0.0728 & 0.4610 / 0.1824 & 1.0000 / \underline{5.1122} & 0.8923 / 0.6057 & \textbf{0.9966} / 7.0951 & \underline{0.9998} / \textbf{2.3391} \\
    kc2 & 0.8205 / 1326.0257 & 0.9280 / 2.6043 & 0.9994 / \underline{3.8094} & 0.8284 / 0.5536 & \textbf{0.9949} / 8.2706 & \underline{0.9974} / \textbf{1.4891} \\
    mfeat-fourier & 0.7685 / 1.2017 & 0.7158 / 1.1968 & \underline{0.9993} / \textbf{6.6904} & 0.6778 / 0.3866 & \textbf{0.9957} / \underline{12.4504} & 0.9727 / 1.8861 \\
    pc4 & 0.5758 / 1.3907 & 0.7034 / 1.3629 & 1.0000 / \textbf{4.8660} & 0.6890 / 0.6901 & \underline{0.9954} / \underline{11.9363} & \textbf{0.9874} / 3.2021 \\
    phoneme & 0.2525 / 0.1547 & 0.5429 / 0.3490 & 0.9999 / \underline{3.3151} & 0.8321 / 0.4354 & \textbf{0.9890} / 4.8743 & \underline{0.9981} / \textbf{0.8710} \\
    \bottomrule
  \end{tabular*}
\end{table}

\TabCon\ satisfies the nominal coverage requirement in 9 out of the 13 CC18 datasets and, in every one of these cases, produces the shortest interval. It still shows undercoverage for \textit{MiceProtein}, \textit{credit-approval}, \textit{mfeat-fourier}, and \textit{pc4} datasets, as similarly highlighted in both 90\% and 95\% nominal levels.

At this higher confidence level, however, the per-dataset calibration pattern differs from that observed at 90\% and 95\%. In most of the cases, AMP is closest to the 99\% target.

PI continues to express the most conservative behavior, with coverage close to one across all datasets and longer intervals. AMP shows considerably better calibration than PI at the 99\% level, but its intervals are often much wider than those of \TabCon. Linear, RFF, and Bootstrap again undercover throughout the benchmark, with the gap from the nominal target becoming pronounced under this higher nominal coverage requirement.

\paragraph{Synthetic Benchmark.} We report coverage probability and interval length at 99\% nominal level separately for each stress-test DGP across the baseline algorithms and \TabCon\ in Table~\ref{tab:stress-per-regime-99}.

\begin{table}[!htbp]
  \centering
  \caption{Per-DGP stress-test coverage probability (CP) and interval length (IL) at the 99\% nominal level. From left to right, results are reported for Linear, RFF, PI, Bootstrap, AMP and \TabCon. For CP, the value closest to the nominal level of 0.99 is shown in \textbf{bold}, and the second closest is \ul{underlined}. For IL, among methods satisfying $\mathrm{CP}\geq 0.99$, the shortest IL is shown in \textbf{bold}, and the second shortest is \ul{underlined}.}
  \label{tab:stress-per-regime-99}
  \setlength{\tabcolsep}{2.5pt}
  \renewcommand{\arraystretch}{1.12}
  \fontfamily{ptm}\fontsize{8pt}{9.6pt}\selectfont
  \begin{tabular*}{\textwidth}{@{\extracolsep{\fill}}lcccccc@{}}
    \toprule
    DGP & Linear & RFF & PI & Bootstrap & AMP & \TabCon \\
    & \multicolumn{1}{c}{CP / IL} & \multicolumn{1}{c}{CP / IL} & \multicolumn{1}{c}{CP / IL} & \multicolumn{1}{c}{CP / IL} & \multicolumn{1}{c}{CP / IL} & \multicolumn{1}{c}{CP / IL} \\
    \midrule
    Regular control & 0.8099 / 4.8338 & 0.8032 / 2.6329 & 0.9973 / \textbf{2.5927} & 0.7749 / 0.6000 & \underline{0.9875} / 6.1256 & \textbf{0.9882} / 1.9779 \\
    Near-constant & 0.8005 / 0.0471 & 0.8102 / 0.0273 & 0.9985 / \textbf{0.0272} & 0.7774 / 0.0053 & \textbf{0.9905} / \underline{0.0639} & \underline{0.9894} / 0.0185 \\
    Bounded & 0.7970 / 4.2661 & 0.8030 / 2.0310 & 0.9985 / \textbf{2.4072} & 0.7769 / 0.5372 & \textbf{0.9905} / \underline{5.5994} & \underline{0.9894} / 1.7959 \\
    Quantized & 0.7835 / 4.5009 & 0.8012 / 2.6643 & 0.9984 / \textbf{2.5762} & 0.7403 / 0.5771 & \textbf{0.9898} / 6.0287 & \underline{0.9894} / 1.9803 \\
    Discontinuous & 0.7894 / 7.9189 & 0.8134 / 3.0824 & 0.9984 / \underline{3.6697} & 0.7613 / 0.8062 & \textbf{0.9902} / 9.2598 & \underline{0.9952} / \textbf{3.4469} \\
    Near-zero & 0.7146 / 3.1717 & 0.7096 / 1.9282 & \underline{0.9959} / \textbf{0.9192} & 0.7619 / 0.3079 & 0.9737 / 2.5396 & \textbf{0.9922} / \underline{1.0857} \\
    High noise & 0.9221 / 17.8955 & 0.9453 / 7.7106 & 0.9999 / \underline{12.8052} & 0.8288 / 2.0432 & \underline{0.9986} / 29.1379 & \textbf{0.9971} / \textbf{7.6297} \\
    Heavy-tail noise & 0.8304 / 6.8321 & 0.8374 / 3.3635 & 0.9987 / \textbf{4.4642} & 0.7857 / 0.7842 & \underline{0.9906} / \underline{10.4412} & \textbf{0.9899} / 2.7916 \\
    Asymmetric outlier & 0.8846 / 17.8712 & 0.8824 / 7.4154 & \underline{0.9934} / \textbf{14.0531} & 0.7432 / 1.4746 & 0.9717 / 21.9920 & \textbf{0.9894} / 5.8064 \\
    Heteroscedastic & 0.8413 / 7.2336 & 0.8335 / 3.7314 & 0.9990 / \underline{4.6068} & 0.7849 / 0.9011 & \textbf{0.9929} / 10.2364 & \underline{0.9948} / \textbf{3.0573} \\
    Categorical + missing & 0.8258 / 21.2948 & 0.7826 / 2.7464 & \underline{0.9965} / \underline{3.8956} & 0.6372 / 0.7730 & 0.9774 / 7.9628 & \textbf{0.9917} / \textbf{3.1763} \\
    \bottomrule
  \end{tabular*}
\end{table}
Compared to the previous cases (i.e., 90\% and 95\%), calibration at the 99\% is stable across the individual stress-test DGPs. The coverage of \TabCon\ remains tightly concentrated around the nominal target. Instead, AMP exhibits more pronounced undercoverage in settings such as near-zero signal, asymmetric outliers, and categorical covariates with missing values. PI, in contrast, preserves the conservative behavior observed at lower nominal levels and remains above 0.99 throughout the benchmark.

Linear, RFF, and Bootstrap continue to undercover across all stress-test regimes. 

\subsection{Runtime Efficiency}
\label{app:runtime-decomposition}

\makeatletter
\setlength{\@fptop}{0pt}
\makeatother

\begin{table}[!htbp]
  \centering
  \caption{Add-on and Core median runtime in seconds, over 30 repetitions. From left to right, results are reported for PI, Global, and \TabCon. Shared executor denotes the median runtime of the \TabPFN\ backbone when evaluated using the common execution pipeline. $C$ and $Q$ denote the numbers of context and query points.}
  \label{tab:tabcon-shared-executor-runtime}

  \setlength{\tabcolsep}{3.2pt}
  \renewcommand{\arraystretch}{1.05}
  \scriptsize

  \begin{tabular}{@{}rrrrrrrrr@{}}
    \toprule
    $C$ & $Q$ & Shared executor
    & \multicolumn{2}{c}{PI}
    & \multicolumn{2}{c}{Global}
    & \multicolumn{2}{c}{\TabCon} \\
    \cmidrule(lr){4-5}
    \cmidrule(lr){6-7}
    \cmidrule(l){8-9}
    & & & Add-on & Core & Add-on & Core & Add-on & Core \\
    \midrule

    64 & 64
    & 0.1673 & 0.0024 & 0.1697
    & 0.0033 & 0.1705 & 0.0060 & 0.1733 \\
    & 256
    & 0.1686 & 0.0024 & 0.1711
    & 0.0038 & 0.1725 & 0.0074 & 0.1760 \\
    & 1{,}024
    & 0.1727 & 0.0084 & 0.1811
    & 0.0128 & 0.1855 & 0.0257 & 0.1984 \\

    \midrule
    128 & 64
    & 0.1680 & 0.0024 & 0.1705
    & 0.0033 & 0.1713 & 0.0060 & 0.1741 \\
    & 256
    & 0.1691 & 0.0025 & 0.1716
    & 0.0038 & 0.1730 & 0.0074 & 0.1766 \\
    & 1{,}024
    & 0.1758 & 0.0084 & 0.1842
    & 0.0130 & 0.1888 & 0.0260 & 0.2019 \\

    \midrule
    256 & 64
    & 0.1704 & 0.0025 & 0.1729
    & 0.0033 & 0.1737 & 0.0061 & 0.1765 \\
    & 256
    & 0.1701 & 0.0025 & 0.1726
    & 0.0039 & 0.1740 & 0.0075 & 0.1777 \\
    & 1{,}024
    & 0.1760 & 0.0084 & 0.1843
    & 0.0131 & 0.1891 & 0.0262 & 0.2022 \\

    \midrule
    512 & 64
    & 0.1729 & 0.0024 & 0.1754
    & 0.0033 & 0.1762 & 0.0060 & 0.1790 \\
    & 256
    & 0.1740 & 0.0025 & 0.1765
    & 0.0039 & 0.1779 & 0.0075 & 0.1815 \\
    & 1{,}024
    & 0.1791 & 0.0085 & 0.1874
    & 0.0132 & 0.1923 & 0.0265 & 0.2055 \\

    \midrule
    1{,}024 & 64
    & 0.1762 & 0.0024 & 0.1786
    & 0.0033 & 0.1795 & 0.0060 & 0.1822 \\
    & 256
    & 0.1793 & 0.0025 & 0.1818
    & 0.0039 & 0.1832 & 0.0075 & 0.1868 \\
    & 1{,}024
    & 0.1816 & 0.0085 & 0.1900
    & 0.0133 & 0.1948 & 0.0267 & 0.2082 \\

    \bottomrule
  \end{tabular}
  
\end{table}

Given the runtime results reported in Section~\ref{sec:experiment}, we further investigate the computational overhead introduced by \TabCon. While the previous results in Table \ref{tab:runtime-grid-new-task} reports end-to-end latency under the native implementation of each method, here we consider another view of the runtime performances in which PI, Global, and \TabCon\ share the same low-level \TabPFN\ backbone computation.

The \emph{Shared executor} column reports the common backbone latency, while \emph{Add-on} measures the additional computation performed after the shared outputs are available. Specifically, this includes distribution decoding for PI, one residual head for Global, and routing together with two residual heads for \TabCon. The \emph{Core} runtime is given by the sum of  \emph{Shared executor} and \emph{Add-on}.

\TabCon\ introduces a larger add-on cost than PI and Global, reflecting the additional routing and residual-head computations, with the difference becoming more visible as the number of query points increases. Nevertheless, the shared \TabPFN\ execution remains by far the dominant component of the total runtime. The add-on measurements further show that the additional computation introduced by \TabCon\ is small in absolute terms. Compared with Global, adding the router and evaluating a second residual head increases the add-on latency by only a few milliseconds.

\end{document}